\def \bP{{\bf P}} 
\def \bE { {\bf E}}
\def \bR {{\bf R}}
\def \P{\mathbb{P}} 
\def \V {\mathbb{V}}
\def \bfE {\mathbb{E}}
 \def \cU {\mathcal{U}}
 \def \cI {\mathcal{I}}
\def \cL{\mathcal{L}}
\def \cO{\mathcal{O}}
 \def \cD{\mathcal{D}}
 \def \cH{\mathcal{H}}
  \providecommand\BibTeX{{%
    \normalfont B\kern-0.5em{\scshape i\kern-0.25em b}\kern-0.8em\TeX}}}
\begin{document}

\title{A Generalized Doubly Robust Learning Framework for Debiasing Post-Click Conversion Rate Prediction}



\author{Quanyu Dai}
\affiliation{\institution{Huawei Noah's Ark Lab \country{} daiquanyu@huawei.com}}
\author{Haoxuan Li}
\affiliation{\institution{Peking University \country{} hxli@stu.pku.edu.cn}}
\author{Peng Wu$^{*}$}
\affiliation{\institution{Beijing Technology and Business University \country{} wupeng@bicmr.pku.edu.cn}}
\author{Zhenhua Dong}
\affiliation{\institution{Huawei Noah's Ark Lab \country{} dongzhenhua@huawei.com}}
\author{Xiao-Hua Zhou$^{*}$}
\affiliation{\institution{Peking University \country{} azhou@bicmr.pku.edu.cn}}

\author{Rui Zhang}
\affiliation{\institution{www.ruizhang.info \country{} rayteam@yeah.net}}

\author{Rui Zhang}
\affiliation{\institution{Huawei Hong Kong Theory Lab \country{} zhangrui191@huawei.com}}

\author{Jie Sun}
\affiliation{\institution{Huawei Hong Kong Theory Lab \country{} j.sun@huawei.com}}

\thanks{$^{*}$Peng Wu and Xiao-Hua Zhou are the corresponding authors.}

\renewcommand{\shortauthors}{Quanyu Dai et al.}



\begin{abstract}
 
Post-click conversion rate (CVR) prediction is an essential task for discovering user interests and increasing platform revenues in a range of industrial applications. One of the most challenging problems of this task is the existence of severe selection bias caused by the inherent self-selection behavior of users and the item selection process of systems. Currently, doubly robust (DR) learning approaches achieve the state-of-the-art performance for debiasing CVR prediction. However, in this paper, by theoretically analyzing the bias, variance and generalization bounds of DR methods, we find that existing DR approaches may have poor generalization caused by inaccurate estimation of propensity scores and imputation errors, which often occur in practice. Motivated by such analysis, we propose a generalized learning framework that not only unifies existing DR methods, but also provides a valuable opportunity to develop a series of new debiasing techniques to accommodate different application scenarios. Based on the framework, we propose two new DR methods, namely DR-BIAS and DR-MSE. DR-BIAS directly controls the bias of DR loss, while DR-MSE balances the bias and variance flexibly, which achieves better generalization performance. In addition, we propose a novel tri-level joint learning optimization method for DR-MSE in CVR prediction, and an efficient training algorithm correspondingly. We conduct extensive experiments on both real-world and semi-synthetic datasets, which validate the effectiveness of our proposed methods.
\end{abstract}

\begin{CCSXML}
<ccs2012>
<concept>
<concept_id>10010147.10010178</concept_id>
<concept_desc>Computing methodologies~Artificial intelligence</concept_desc>
<concept_significance>500</concept_significance>
</concept>
</ccs2012>
\end{CCSXML}

\ccsdesc[500]{Information systems~Recommender systems}

\keywords{Recommender Systems; Post-click Conversion Rate; Selection Bias; Doubly Robust Learning}


\maketitle
\sloppy

\section{Introduction}

The post-click conversion rate (CVR) prediction has gained much attention in modern recommender systems \cite{RecSys_Saito20, ESMM18, Wu-etal2018, MRDR_DL, wu2022causal}, as post-click conversion feedback contains strong signals of user preference and directly contributes to the gross merchandise volume (GMV). In many industrial applications, 
CVR prediction is commonly regarded as the central task for discovering user interests and increasing platform revenues.
For a user-item pair, CVR represents the probability of the user consuming the item after he/she clicks it.
Essentially, the task of CVR prediction is a {\bf counterfactual} problem. This is because what we want to know during inference is intrinsically the conversion rates of all user-item pairs under the assumption that all items are clicked by all users, which is a hypothetical situation that contradicts reality. 

Most of the literature treats CVR prediction as a missing data problem in which the conversion labels are observed in clicked events and missing in unclicked events. A conventional and natural strategy is to train the CVR model only based on clicked events and then predict CVR for all the events~\cite{KDD_LuPWPWY17,IJCAI_SuZDZYWBXHY20}. However, this estimator is biased and often obtains a sub-optimal result due to the existence of severe selection bias~\cite{ESMM18,MRDR_DL}. In addition, the data sparsity issue, namely, the sample size of clicked events being much smaller than that of unclicked events, will amplify the difference between these two types of events and thus aggravate the selection bias issue.

Several approaches have been proposed to derive unbiased estimators of CVR by dealing with selection bias. 
Error imputation~\cite{chang2010training} and inverse propensity score (IPS) weighting  \cite{Multi_IPW,Schnabel-etal2016} are two main strategies for debiasing CVR prediction. 
In addition, Doubly robust (DR) estimators can be constructed by combining EIB and IPS approaches \cite{Multi_IPW,RecSys_Saito20,Wu-etal2018}. 
A DR estimator enjoys the property of double robustness, which guarantees the unbiased estimation of CVR if either the imputed errors or propensity scores are accurate. 
Compared with EIB and IPS methods, the DR method has a better performance in general~\cite{Wang-Zhang-Sun-Qi2019}. 


There are still some concerns for DR methods, even though they usually compare favorably with EIB and IPS estimators. 
Theoretical analysis of DR estimators in Section~\ref{dr_concerns} shows that the bias, variance and generalization bounds all depend on the error deviation of the imputation model weighted by the inverse of propensity score. This is a worrying result, because the inverse of propensity score tends to be large in unclicked events and error deviations of the imputation model are most likely to be inaccurate in unclicked events due to the selection bias and data sparsity. 
It indicates that the bias, variance and generalization bounds may still be large 
under inaccurate imputed errors in unclicked events.  
Recently, several approaches, mainly including doubly robust joint learning (DR-JL) 
\cite{Wang-Zhang-Sun-Qi2019} and more robust doubly robust (MRDR) \cite{MRDR_DL}, have been designed to alleviate this problem. 
MRDR aims to reduce the variance of DR loss to enhance model robustness, but it may still have poor generalization when the bias is large. 
DR-JL attempts to reduce the error deviation of the imputation model
in order to obtain a more accurate estimator of CVR, but this method does not
 control the bias and variance directly. 
 Therefore, it would be helpful if we could find a more effective way to control the bias and variance directly.    

In this paper, we reveal the counterfactual issues behind the CVR prediction task and give a formal and strict causal definition of CVR. 
Then, by analyzing the bias, variance and generalization bound of the DR estimator, we derive a novel generalized learning framework that can accommodate a wide range of CVR estimators through specifying different metrics of loss functions. 
This framework unifies various existing doubly robust methods for debiasing CVR prediction, such as DR-JL and MRDR.
Most importantly, it provides key insights for designing new estimators to accommodate different application scenarios in CVR prediction. 
Based on this framework, from a perspective of bias-variance trade-off, we propose two new doubly robust estimators, called {\bf DR-BIAS} and {\bf DR-MSE}, which are designed to more flexibly control the bias and mean squared error (MSE) of DR loss function, respectively. 
DR-MSE achieves better generalization performance based on our analysis compared with existing DR based methods. In addition, we propose a novel tri-level joint learning optimization method for flexible DR-MSE in CVR prediction, and an efficient training algorithm correspondingly.
Extensive experiments are carried out to validate the advantages of the proposed methods compared with state-of-the-art techniques. DR-MSE outperforms them up to 3.22\% in DCG@2 in our experiments.

The main contributions of this paper can be summarized as follows:
(1) We propose a generalized framework of doubly robust learning, which not only unifies the existing DR methods, but also provides key insights for designing new estimators with different requirements to accommodate different application scenarios.
(2) Based on the proposed framework, we design two new doubly robust methods, called DR-BIAS and DR-MSE, which can better control the bias and mean squared error, compared with existing methods.
(3) For the bias-variance tradeoff parameter of DR-MSE, we propose a tri-level DR-MSE joint learning optimization for the CVR prediction task, and an efficient training algorithm correspondingly.
(4) Experimental results on both \textbf{real-world} and \textbf{semi-synthetic} datasets show that the two proposed doubly robust methods outperform the state-of-the-art methods significantly. Especially, both datasets with missing-at-random ratings and large industrial dataset are used for comprehensive evaluation. 

\section{Preliminaries}
In this section, we uncover the counterfactual feature of CVR prediction task 
within the potential outcome framework~\cite{Rubin1974,Imbens-Rubin2015}, 
and discuss some existing approaches for CVR prediction. 

\subsection{Causal Problem Definition}
Notation is described as follows.  
Let $\cU = \{1, 2, ..., m\}$ and  $\cI = \{1, 2, ..., n\}$ be the sets of $m$ users and $n$ items, respectively, and $\cD = \cU \times \cI$ be the set of all user-item pairs. Let $x_{u,i}$ be the feature vector of user $u$ and item $i$, and   $r_{u,i} \in \{0, 1\}$ be the indicator of the observed conversion label.
Let $o_{u,i}$ be the indicator of a click event, i.e., $o_{u,i} = 1$ if user $u$ clicks item $i$, $o_{u,i} = 0$ otherwise. Then, $\cO = \{ (u,i) \mid (u,i)\in \cD, o_{u,i} =  1 \}$ denotes all the clicked events.   

For any user-item pair $(u,i)$, we are interested in predicting the CVR {\bf if} user $u$  had clicked item $i$. Notice in particular that the word ``{\bf if}'' is {\bf counterfactual}.
Specifically, in the real world, each user clicks only some items and many items have never been clicked by some users, but what we want to know is the conversion rates of all the user-item pairs when each user clicks all items, which is a hypothetical situation 
 that contradicts reality.
 
Potential outcome is a basic tool to delineate counterfactual quantity in causal inference \cite{Imbens-Rubin2015}.
Through it, the task of predicting CVR can be 
defined formally.  Concretely,  we treat  $o_{u,i}$ as a treatment (or an intervention) and define the potential conversion label   
   $r_{u,i}(1)$, which represents the conversion label of a user $u$ on an item $i$ if the item is clicked by the user. Correspondingly, $r_{u,i}(0)$ is defined as the conversion label if the user $u$ did not click the item $i$.
Then the CVR can be fundamentally defined as
    {
  \begin{equation}\label{cvr}     
  \P ( r_{u,i}(1) = 1 \mid X_{u,i}= x_{u,i}  ),      
  \end{equation}	
  }
which is a causal definition and it is 
coherent and consistent with the practical implications of CVR in recommender systems.  In comparison,  the conventional definition of CVR (see \cite{ESMM18} ), defined by  
		$ \P ( r_{u,i} = 1 \mid X_{u,i}= x_{u,i},  o_{u,i}  = 1 )$, 
is based on association (or correlation) and  lost the meaning of ``counterfactual''.  
For estimating CVR in Equation (\ref{cvr}), a fundamental challenge is that only one of the potential outcome $(r_{u,i}(1), r_{u,i}(0))$ is observable. 
By consistency assumption, $r_{u,i}(1)$ is observed  when $o_{u,i} = 1$, missing otherwise. 
Therefore, the goal of estimating CVR can be recast into a missing data problem.  

For ease of presentation, we denote $\bR \in \{0,  1\}^{m\times n}$ as the full potential conversion label matrix with each element being $r_{u,i}(1)$, and let $\bR^{o} = \{ r_{u,i}(1) \mid (u,i) \in \cO \} = \{ r_{u,i} \mid (u,i) \in \cO \}$ be the set consisting of potential conversion labels $r_{u,i}(1)$ in clicked events.  
Let $\hat \bR \in [0, 1]^{m\times n}$ be the predicted conversion rate matrix, where each entry $\hat r_{u,i}(1) \in [0, 1]$ denotes the predicted conversion rate obtained by a model $f_{\phi}(x_{u,i})$ with parameters $\phi$. If the full potential conversion label matrix $\bR$ was observed,  the ideal loss function is 
        {
		\begin{equation}
				\cL_{ideal}(\hat \bR, \bR)  
				=  \frac{1}{|\cD|}  \sum_{(u,i) \in \cD} e_{u,i},   
		\end{equation}    
		}
where $e_{u,i}$ is the prediction error. In this paper, we employ the cross entropy loss  
 			$e_{u,i}  
 			= - r_{u,i}(1) \log \{\hat r_{u,i}(1) \} - \{1- r_{u,i}(1)\} \log \{1 - \hat r_{u,i}(1)\}$. 
	{$\cL_{ideal}(\hat \bR, \bR)$} can be regarded as a benchmark of unbiased loss function theoretically, even though it is infeasible  due to the 
	inaccessibility of $\bR$ practically.

\subsection{Existing Methods}\label{existing_methods}
A direct method is to use the following loss function 
{
	 $ \cL_{naive}(\hat \bR, \bR^{o})  = 
		 |\cO|^{-1} \sum_{(u,i) \in \cO } e_{u,i}$ 
}
based on the observed conversion labels $\bR^o$. {$\cL_{naive}(\hat \bR, \bR^{o})$} is not an unbiased estimate of {$\cL_{ideal}(\hat \bR, \bR)$}. Next, we will briefly review some typical and latest methods for addressing the selection bias issue.

\subsubsection{Error Imputation Based Estimator}  

The error imputation based (EIB) estimator  can be derived by introducing an  error imputation model $\hat e_{u,i} = g_{\theta}(x_{u,i})$ to fit the prediction error $e_{u,i}$. 
Given the imputed errors,  the loss function of EIB method is given as  
	$ 
			 \cL_{EIB}(\hat \bR, \bR^{o}) =   | \cD |^{-1}  \sum_{(u,i) \in \cD} [ o_{u,i} e_{u,i} + (1- o_{u,i}) \hat e_{u,i} ].
	$


\subsubsection{Inverse Propensity Score Estimator} The inverse propensity score (IPS) approach \cite{Schnabel-etal2016} aims to recover the distribution of all events by  weighting the clicked events with $1/p_{u,i}$, where $p_{u,i} = \P(o_{u,i} = 1) = \mathbb{E}[o_{u,i}]$ is the propensity score~\cite{Rosenbaum-Rubin1983}. Given the estimate of $p_{u,i}$, denoted as $\hat p_{u,i}$,  the loss function of IPS estimator  is presented as 
  $			\cL_{IPS}(\hat \bR, \bR^{o}) = |\cD|^{-1}  \sum_{(u,i) \in \cD} o_{u,i} e_{u,i} / \hat p_{u,i} $.  
				


\subsubsection{Doubly Robust Joint Learning Estimator}
Doubly robust (DR) estimator can be constructed in the augmented IPS form~\cite{Bang-Robins2005, Wang-Zhang-Sun-Qi2019} 
by combining EIB and IPS methods. Given the learned propensities $\hat \bP = \{ \hat p_{u,i} \mid (u,i) \in \cD \}$ and imputed errors $\hat \bE = \{ \hat e_{u,i} \mid (u,i) \in \cD \}$, its loss function is formulated as 
        {
		\begin{align}  \label{dr_loss}
					\cL_{DR}(\hat \bR, \bR^{o}) ={}& \frac{1}{ |\cD|} \sum_{(u,i) \in \cD}\Big [ \hat e_{u,i}  +  \frac{ o_{u,i} (e_{u,i} -  \hat e_{u,i}) }{ \hat p_{u, i} } \Big ]. 
		\end{align}}
{$\cL_{DR}(\hat \bR, \bR^{o})$} involves the conversion rate model $\hat r_{u,i}(1) = f_{\phi}(x_{u,i})$ and error imputation model $\hat e_{u,i} = g_{\theta}(x_{u,i})$. Doubly robust joint learning (DR-JL) approach \cite{Wang-Zhang-Sun-Qi2019} estimates them alternately: 
given $\hat \theta$, $\phi$ is updated by minimizing (\ref{dr_loss}); given $\hat \phi$, $\theta$ is updated by minimizing 
    {
	\begin{equation} \label{drjl_loss}
		\cL_{e}^{DR-JL}(\theta)   = 
		 \sum_{(u, i) \in \cD} \frac{ o_{u,i} (  \hat e_{u,i} - e_{u,i} )^{2}  }{    \hat p_{u,i}  }.
	\end{equation}}
\subsubsection{More Robust Doubly Robust Estimator}
 Recently, the more robust doubly robust (MRDR) method~\cite{MRDR_DL} enhances the robustness of DR-JL by optimizing the variance of the DR estimator with the imputation model. Specifically, MRDR keeps the loss of the CVR prediction model in (\ref{dr_loss}) unchanged, while replacing the loss of the imputation model in (\ref{drjl_loss}) with the following loss
    {
 	\begin{equation}\label{mrdr_loss}
		 \cL_{e}^{MRDR}(\theta) = \sum_{(u,i)\in \cD} \frac{ o_{u,i} (  \hat e_{u,i} - e_{u,i} )^{2}  }{    \hat p_{u,i}  } \cdot  \frac{ 1- \hat p_{u,i}  }{ \hat p_{u,i} }. 
	\end{equation}  
	}
This substitution can 
help reduce the variance of {$\cL_{DR}(\hat \bR, \bR^{o})$} and hence a more robust estimator might be obtained. 

\subsubsection{Bias, Variance and Generalization Bound of DR Estimator}

  Given a hypothesis space $\cH$ of CVR prediction matrix $\hat \bR$, we define the optimal
  $\hat \bR^{*}$ as 
	{$ \hat \bR^{*} = \arg \min_{ \hat \bR \in \cH } \cL_{DR}(\hat \bR, \bR^{o})$}. 
	Given  imputed errors $\hat \bE$ and learned propensities $\hat \bP$,  
 The following Lemmas \ref{lemma1} and \ref{lemma2} present the existing theoretical results of  DR estimator~\cite{schnabel2016recommendations,Wang-Zhang-Sun-Qi2019}.

\begin{lemma}[Bias and Variance]
\label{lemma1}
 The bias and variance of DR estimator are given as 
    {
	\begin{align*}
   Bias[  \cL_{DR}(\hat \bR, \bR^{o}) ] ={}&   \frac{1}{ | \cD | } \Big |  \sum_{(u,i) \in D}  (p_{u,i} - \hat p_{u,i}) \frac{ ( e_{u,i} - \hat e_{u,i})  }{ \hat p_{u,i} }  \Big |, \\
    \V_{\cO}[ \cL_{DR}(\hat \bR, \bR^{o})  ]  ={}&  \frac{1}{ |\cD|^{2} } \sum_{(u,i)\in \cD}    p_{u,i} (1- p_{u,i})  \frac{ ( \hat e_{u,i} - e_{u,i} )^{2} }{  \hat p^{2}_{u,i}  }. 
    \end{align*}}
\end{lemma}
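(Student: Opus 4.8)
The plan is to treat the click indicators $o_{u,i}$ as the only random quantities: the true errors $e_{u,i}$, the imputed errors $\hat e_{u,i}$, and the learned propensities $\hat p_{u,i}$ are all regarded as fixed given the models and the (hypothetical) full label matrix, so every expectation and the variance $\V_{\cO}$ are taken over the joint law of $\{o_{u,i}\}$, with $o_{u,i}\sim\mathrm{Bernoulli}(p_{u,i})$ and hence $\bfE_{\cO}[o_{u,i}]=p_{u,i}$.

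For the bias, first I would apply linearity of expectation to (\ref{dr_loss}) and substitute $\bfE_{\cO}[o_{u,i}]=p_{u,i}$, giving
\begin{align*}
\bfE_{\cO}[\cL_{DR}(\hat\bR,\bR^o)] = \frac{1}{|\cD|}\sum_{(u,i)\in\cD}\Big[\hat e_{u,i} + \frac{p_{u,i}(e_{u,i}-\hat e_{u,i})}{\hat p_{u,i}}\Big].
\end{align*}
Since the bias is $\big|\bfE_{\cO}[\cL_{DR}]-\cL_{ideal}\big|$ with $\cL_{ideal}=|\cD|^{-1}\sum_{(u,i)}e_{u,i}$, I would subtract term by term and factor out $(e_{u,i}-\hat e_{u,i})$:
\begin{align*}
\hat e_{u,i}-e_{u,i}+\frac{p_{u,i}(e_{u,i}-\hat e_{u,i})}{\hat p_{u,i}} = (e_{u,i}-\hat e_{u,i})\Big(\frac{p_{u,i}}{\hat p_{u,i}}-1\Big) = (p_{u,i}-\hat p_{u,i})\frac{e_{u,i}-\hat e_{u,i}}{\hat p_{u,i}}.
\end{align*}
Summing over $\cD$ and taking absolute value yields exactly the claimed bias. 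This also makes double robustness transparent: each summand vanishes whenever $\hat p_{u,i}=p_{u,i}$ or $\hat e_{u,i}=e_{u,i}$.

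For the variance, I would use that the $o_{u,i}$ are independent across pairs $(u,i)$, so all cross-covariances vanish and the variance of the sum equals the sum of the per-term variances. The deterministic offset $\hat e_{u,i}$ contributes nothing to the variance, leaving
\begin{align*}
\V_{\cO}[\cL_{DR}(\hat\bR,\bR^o)] = \frac{1}{|\cD|^2}\sum_{(u,i)\in\cD}\Big(\frac{e_{u,i}-\hat e_{u,i}}{\hat p_{u,i}}\Big)^2\V_{\cO}[o_{u,i}],
\end{align*}
and substituting the Bernoulli variance $\V_{\cO}[o_{u,i}]=p_{u,i}(1-p_{u,i})$ gives the stated formula.

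The computation is essentially routine; the only genuine subtlety — and the step I would flag explicitly — is the independence of the clicks across user-item pairs, which is what kills the covariance terms and collapses the variance into the displayed single-index sum. A secondary point worth care is keeping straight which quantities are conditioned on (the hats and the true errors) versus averaged over (the clicks), since the bias is measured against the infeasible benchmark $\cL_{ideal}$ rather than against $\bfE_{\cO}[\cL_{DR}]$ itself.
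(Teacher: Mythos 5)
Your proposal is correct and follows essentially the same route as the paper's own proof: compute $\bfE_{\cO}[\cL_{DR}]$ via $\bfE_{\cO}[o_{u,i}]=p_{u,i}$, subtract $\cL_{ideal}$ and factor to get $(p_{u,i}-\hat p_{u,i})(e_{u,i}-\hat e_{u,i})/\hat p_{u,i}$, then for the variance use independence across pairs and the Bernoulli variance $p_{u,i}(1-p_{u,i})$. The only difference is presentational — you make explicit the cross-pair independence that the paper invokes silently when it pushes $\V_{\cO}$ inside the sum — which is a worthwhile clarification but not a different argument.
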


\begin{lemma}[Generalization Bound]
\label{lemma2}
For any finite hypothesis space $\cH$ of prediction matrices,  then  with probability $1 - \eta$, 
    {
	\begin{align*} 
	  \cL_{ideal}(\hat \bR^{*}, \bR)  
	  \leq{}&  \cL_{DR}(\hat \bR^{*}, \bR^{o})  +   \underbrace{\frac{1}{ | \cD | }  \sum_{(u,i) \in \cD}   \frac{ | p_{u,i} - \hat p_{u,i} |  }{ \hat p_{u,i} }  |  e_{u,i} - \hat e_{u,i}^{*}  | }_{\text{Bias term}}  \\
	   &  +  \underbrace{ \sqrt{ \frac{ \log(2|\cH | /\eta)  }{ 2 |\cD |^{2}  }   \sum_{(u,i)\in \cD} (  \frac{  e_{u,i} - \hat e_{u,i}^{\dag}  } { \hat p_{u,i} }  )^{2}   } }_{\text{Variance term}},       
	   \end{align*}
	   } 
 where  $\hat e_{u,i}^{*}$ is the prediction error associated with  $\hat \bR^{*}$,  $\hat e_{u,i}^{\dag}$ is the prediction error corresponding to the prediction matrix $\hat \bR^{\dag} = \arg \max_{ \hat \bR^{h} \in \cH } \sum_{(u,i) \in \cD } (e_{u,i} -\hat e_{u,i}^{h})^2 /\hat p_{u,i}^2$. 
\end{lemma}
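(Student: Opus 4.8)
The plan is to split the gap $\cL_{ideal}(\hat\bR^{*},\bR)-\cL_{DR}(\hat\bR^{*},\bR^{o})$ into a deterministic mean‑deviation part, which will produce the Bias term, and a stochastic fluctuation part, which will produce the Variance term through a concentration argument. Writing $\bfE_{\cO}[\cdot]$ for the expectation over the click indicators $\{o_{u,i}\}$ (assumed independent) and suppressing the arguments $(\hat\bR^{*},\bR)$ and $(\hat\bR^{*},\bR^{o})$, I would start from
\begin{equation*}
\cL_{ideal}-\cL_{DR}=\big(\cL_{ideal}-\bfE_{\cO}[\cL_{DR}]\big)+\big(\bfE_{\cO}[\cL_{DR}]-\cL_{DR}\big),
\end{equation*}
and bound the two summands separately.

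For the first summand I would reuse the bias computation underlying Lemma~\ref{lemma1}: taking the expectation of (\ref{dr_loss}) over $o_{u,i}$ (with $\bfE[o_{u,i}]=p_{u,i}$) and subtracting $\cL_{ideal}$ gives the signed identity
\begin{equation*}
\bfE_{\cO}[\cL_{DR}]-\cL_{ideal}=\frac{1}{|\cD|}\sum_{(u,i)\in\cD}(p_{u,i}-\hat p_{u,i})\frac{e_{u,i}-\hat e_{u,i}^{*}}{\hat p_{u,i}}.
\end{equation*}
Hence the first summand is the negative of this quantity, and the triangle inequality bounds it above by the stated Bias term $|\cD|^{-1}\sum_{(u,i)\in\cD}\frac{|p_{u,i}-\hat p_{u,i}|}{\hat p_{u,i}}|e_{u,i}-\hat e_{u,i}^{*}|$. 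This step is purely algebraic and carries no probability.

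For the second summand I would argue by concentration, uniformly over $\cH$. Fix $\hat\bR^{h}\in\cH$ and set the shorthand $\sigma_{h}^{2}=\sum_{(u,i)}(e_{u,i}-\hat e_{u,i}^{h})^{2}/\hat p_{u,i}^{2}$. Then $\cL_{DR}(\hat\bR^{h},\bR^{o})=|\cD|^{-1}\sum_{(u,i)}X_{u,i}^{h}$ is an average of the independent variables $X_{u,i}^{h}=\hat e_{u,i}^{h}+o_{u,i}(e_{u,i}-\hat e_{u,i}^{h})/\hat p_{u,i}$, each of which takes only two values as $o_{u,i}\in\{0,1\}$ and therefore lies in an interval of length $|e_{u,i}-\hat e_{u,i}^{h}|/\hat p_{u,i}$. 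Hoeffding's inequality then yields, for each fixed $h$ (with $\cL_{DR}=\cL_{DR}(\hat\bR^{h},\bR^{o})$),
\begin{equation*}
\P\big(|\cL_{DR}-\bfE_{\cO}[\cL_{DR}]|\ge t\big)\le 2\exp\big(-2|\cD|^{2}t^{2}/\sigma_{h}^{2}\big).
\end{equation*}
A union bound over the finite $\cH$ and inversion of this tail at level $\eta$ show that, with probability at least $1-\eta$, $\bfE_{\cO}[\cL_{DR}]-\cL_{DR}\le\sqrt{\sigma_{h}^{2}\log(2|\cH|/\eta)/(2|\cD|^{2})}$ holds simultaneously for all $h$.

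The main obstacle is that the minimizer $\hat\bR^{*}$ is itself a function of the observed data through the $o_{u,i}$, so Hoeffding cannot be applied to it directly; this is precisely why the uniform‑over‑$\cH$ bound above is needed, after which one specializes to $h=*$. A related subtlety is that the variance proxy $\sigma_{h}^{2}$ depends on $h$, so to obtain a single term valid whatever matrix is selected as the minimizer, I would upper‑bound $\sigma_{*}^{2}$ by $\sigma_{\dag}^{2}=\max_{h}\sigma_{h}^{2}$, the maximum being attained at $\hat\bR^{\dag}=\arg\max_{\hat\bR^{h}\in\cH}\sum_{(u,i)}(e_{u,i}-\hat e_{u,i}^{h})^{2}/\hat p_{u,i}^{2}$. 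Since $\sigma_{*}^{2}\le\sigma_{\dag}^{2}$, the second summand is bounded above by the stated Variance term $\sqrt{\frac{\log(2|\cH|/\eta)}{2|\cD|^{2}}\sum_{(u,i)}(\tfrac{e_{u,i}-\hat e_{u,i}^{\dag}}{\hat p_{u,i}})^{2}}$. Adding the Bias and Variance bounds to $\cL_{DR}(\hat\bR^{*},\bR^{o})$ then yields the claimed inequality.
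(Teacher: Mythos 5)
Your proposal is correct and follows essentially the same route as the paper's proof: the same decomposition into a bias part (bounded via the Lemma~\ref{lemma1} identity and the triangle inequality) and a fluctuation part (bounded via Hoeffding's inequality plus a union bound over the finite $\cH$, then specialized to the data-dependent minimizer and majorized by the worst-case matrix $\hat\bR^{\dag}$). The only cosmetic difference is that you keep per-hypothesis variance proxies $\sigma_h^2$ through the union bound and pass to $\sigma_\dag^2$ at the end, whereas the paper replaces each $\sigma_h^2$ by $\sigma_\dag^2$ before inverting the tail; both yield the identical final bound.
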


\section{Proposed Methods} 


\subsection{Motivation}\label{dr_concerns}

We reveal some worrying features of DR method, which provides an initial motivation.   
Lemma~\ref{lemma1} formally gives the bias and variance of the DR estimator.
According to the lemma, {$Bias[\cL_{DR}(\hat \bR, \bR^{o})] \approx 0$}, if either $(\hat e_{u,i} - e_{u,i}) \approx 0$ or $(\hat p_{u,i} - p_{u,i}) \approx  0$, which is the property of double robustness. 
Nonetheless, both the bias and variance terms still have some issues.
Specifically, the bias consists of the product of the errors of the propensity score model and imputation model weighted by $1/\hat p_{u,i}$. 
The term $(e_{u,i}-\hat e_{u,i}) / \hat p_{u,i}$ is worrisome, as  $1/\hat p_{u,i}$ tends to be large in unclicked events and inaccurate estimates of $e_{u,i}$ are most likely to occur in these events.
Analogously,  $( \hat e_{u,i} - e_{u,i} )^{2} /  \hat p^{2}_{u,i}$ in the variance term is also likely to be problematic.
 
 It can be seen that both the bias and variance 
 are 
correlated with the term of error deviation $| \hat e_{u,i} - e_{u,i} |$. Thus, it may be helpful to reduce them if the magnitude of error deviation is small. This is the basic idea of DR-JL approach that tries to reduce the error deviations of all events by optimizing the loss function (\ref{drjl_loss}). Further, the MRDR method~\cite{MRDR_DL} proposed replacing
 {$\cL_{e}^{DR-JL}(\theta)$} in (\ref{drjl_loss}) with {$\cL_{e}^{MRDR}(\theta)$} in (\ref{mrdr_loss}) to deal with the large variance term. The idea behind Equation (\ref{mrdr_loss}) is the truth that  
    {
	 \[   \V_{\cO}[ \cL_{DR}(\hat \bR, \bR^{o})  ]  =    \frac{1}{ |\cD|^{2} } \sum_{(u,i)\in \cD}  \mathbb{E}_{\cO} [ \frac{ o_{u,i} (1- p_{u,i}) (  \hat e_{u,i} - e_{u,i} )^{2}  }{    \hat p_{u,i}^{2}  } ],	\]
	 }
namely, the expectation of {$\cL_{e}^{MRDR}(\theta)$} equals to {$\V_{\cO}[ \cL_{DR}(\hat \bR, \bR^{o})]$}.  

Interestingly, Lemma~\ref{lemma2} 
shows that the generalization bound 
depends on a weighted sum of the bias term and square root of the variance term in addition to the empirical 
loss, which fully reflects the feature of bias-variance trade-off. Since DR-JL does not control the bias and variance directly and MRDR pays no attention to the bias, both of them may still have poor generalization performance.  

\subsection{A Generalized DR Learning Framework}

The difference between DR-JL and MRDR lies in the loss function of the error imputation model. As presented in Section~\ref{existing_methods}, the alternating algorithm of DR-JL implies that its underlying loss is {$\cL_{DR}(\hat \bR, \bR^o) + \cL_e^{DR-JL}(\theta).$}
Similarly, the real loss of MRDR is {$\cL_{DR}(\hat \bR, \bR^o) + \cL_e^{MRDR}(\theta)$}.
 Note that the real loss functions of DR-JL and MRDR share a similar structure, so they can be discussed within a generalized framework. 
 The real loss function of this framework has the following form
    {
    \begin{equation} \cL(\hat \bR, \bR^o) + Metric\{ \cL(\hat \bR, \bR^o) \},              \end{equation}
    }
where {$\cL(\hat \bR, \bR^o)$} is an arbitrary unbiased loss function for training CVR prediction model, such as {$\cL_{IPS}(\hat \bR, \bR^o)$}, {$\cL_{EIB}(\hat \bR, \bR^o)$} and {$\cL_{DR}(\hat \bR, \bR^o)$}. {$ Metric\{ \cL(\hat \bR, \bR^o) \}$} is a  pre-specified metric that reflects some features of {$\cL(\hat \bR, \bR^o)$} and is usually applied to learn the error imputation model.      
For example, the MRDR chooses {$\V_{\cO}[ \cL_{DR}(\hat \bR, \bR^{o})]$} as the metric, and DR-JL uses $\sum_{(u,i)\in \cD} (\hat e_{u,i} - e_{u,i})^2$. 
Table 2 summarizes the metrics and ideas  of existing doubly robust methods and our proposed methods, DR-BIAS and DR-MSE, which will be detailedly illustrated in Section~\ref{new_methods}.   
\begin{table}[h!]
\scalebox{0.85}{
\begin{threeparttable}  
\caption{Generalized framework of various DR methods}
\vspace{-0.3cm}
\centering 
\begin{tabular}{l l l}    
\hline
			    Method   	 &    Metric  &  Goal   \\
\hline
                  DR-JL  &  $\sum_{ (u,i)\in \cD } (\hat e_{u,i} - e_{u,i})^2$     & Control error of imputation.        \\
	   	          MRDR      &   $\V_{\cO}[ \cL_{DR}(\hat \bR, \bR^{o})]$  & Control variance.             \\
                  DR-BIAS  & $Bias[  \cL_{DR}(\hat \bR, \bR^{o})]$     &  Further reduce  bias.     \\ 
DR-MSE    &   $MSE[\cL_{DR}(\hat \bR, \bR^{o})]$   &  Bias-variance trade-off.  \\ 
                   \hline  
	\end{tabular}
	\end{threeparttable}}
	\vspace{-0.2cm}
\end{table}

It is noteworthy that due to the missing $r_{u,i}(1)$,
optimizing {$Metric\{ \cL(\hat \bR, \bR^o) \}$} directly is sometimes not feasible. In this case, one can use an approximation of {$Metric\{ \cL(\hat \bR, \bR^o) \}$}. For example, DR-JL adopts the feasible 
loss function~(\ref{drjl_loss}) to approximate the infeasible $\sum_{ (u,i)\in \cD } (\hat e_{u,i} - e_{u,i})^2$,  and MRDR employs~(\ref{mrdr_loss}) to substitute {$\V_{\cO}[ \cL_{DR}(\hat \bR, \bR^{o})]$}.

Importantly, the proposed framework 
provides a valuable opportunity to develop a series of new unbiased CVR estimators with different characteristics to accommodate different application scenarios. In Section~\ref{new_methods}, we will develop two new DR approaches based on this framework. 

\subsection{Two New DR Methods}\label{new_methods}
As discussed in Section~\ref{dr_concerns}, MRDR aims to reduce the variance of {$\cL_{DR}(\hat \bR, \bR^o)$}, and is expected to achieve a more robust performance. However, this strategy works well only when {$Bias[ \cL_{DR}(\hat \bR, \bR^o) ]$} is small enough as suggested by the generalization bound presented in Lemma~\ref{lemma2}. Reducing variance  is less effective when the bias is large. DR-JL attempts to lower both the bias and variance  
by reducing the error deviation of the imputation model.  
Nevertheless, it does not directly control the bias and variance of  {$\cL_{DR}(\hat \bR, \bR^o)$}. 
To alleviate these limitations, we propose two new DR methods, DR-BIAS and DR-MSE, which are designed to further reduce bias and achieve better bias-variance trade-off, respectively.

\subsubsection{DR-BIAS}\label{dr-bias}
DR-BIAS aims at further reducing the bias of the typical DR method through the optimization of the imputation model, since an accurate CVR prediction means that the bias should be small enough. Based on Lemmas~\ref{lemma1} and~\ref{lemma2}, we design a variant of the bias of DR method as the metric to achieve this goal, given by 
        {
		\[      \frac{1}{ | \cD | }  \sum_{(u,i) \in D}  \frac{ (o_{u,i} - \hat p_{u,i})^{2}  }{ \hat p^{2}_{u,i} } ( e_{u,i} - \hat e_{u,i})^{2}.          \]}
However, the above metric is infeasible due to the missing of $e_{u,i}$ in unclicked events. We make an approximation of it and define the loss of the imputation model of DR-BIAS as follows
{
\begin{equation} \label{bias_loss} 
\cL_{e}^{DR-BIAS}(\theta) = \sum_{(u,i)\in \cD} \frac{ o_{u,i} (  \hat e_{u,i} - e_{u,i} )^{2}  }{  \hat p_{u,i} } \cdot  \frac{  (o_{u,i} - \hat p_{u,i})^{2}   }{ \hat p_{u,i}^2 }.  \end{equation}
}

By a comparison between Equation (\ref{mrdr_loss}) and (\ref{bias_loss}), we find that
 (\ref{bias_loss}) just substitutes the weight $(1-\hat p_{u,i})/\hat p_{u,i}$ with $(1 - \hat p_{u,i})^2/\hat p_{u,i}^2$ in clicked events. Also note that 
 {
    \begin{align*}
    \begin{cases}
      & (1-\hat p_{u,i})/\hat p_{u,i}    >1, \text{ if } \hat p_{u,i} < 1/2, \\
      & (1-\hat p_{u,i})/\hat p_{u,i}   <1, \text{ if } \hat p_{u,i} > 1/2,
    \end{cases}
    \end{align*}}
which means that DR-BIAS further magnifies  
the penalty of the clicked 
events with low propensity, and minifies those with high propensity. This leads to a desired effect:  
in the clicked events that the propensity model performs poorly, the amplified weights force the error imputation model to perform well. In other words, error imputation model complements the inaccurate part of the propensity score model.   
Thus, DR-BIAS would have smaller bias than other methods.

\subsubsection{DR-MSE}
Lemma~\ref{lemma2} indicates that pursuing the bias reduction  or variance reduction alone cannot fully control the generalization error. 
Seeking a better balance between the bias and variance appears to be a more effective way to improve the prediction accuracy. Therefore, we design a new model, namely DR-MSE, to achieve this goal. Specifically, a generalized Mean Squared Error (MSE) metric for DR-MSE method is defined as 
    {
    \begin{equation}\label{mse_loss}
     	\cL_{e}^{DR-MSE}(\theta) = \lambda\cL_{e}^{DR-BIAS}(\theta) + 	(1-\lambda)\cL_{e}^{MRDR}(\theta),
    \end{equation}}
where $\lambda$ is a hyper-parameter for controlling the strength of the bias term and the variance term. When $\lambda=1$, DR-MSE is reduced to DR-BIAS; when $\lambda=0$, DR-MSE is reduced to MRDR; when $\lambda=0.5$, DR-MSE optimizes the MSE of {$\cL_{DR}(\hat \bR, \bR^{o})$} scaled by 0.5 through the imputation model.

\textcolor{black}{However, simply using a hyper-parameter $\lambda$ for all samples is not flexible enough due to the different characteristics and popularities of users and items.  
Specifically, different samples suffer from different issues during training, i.e., some might have higher variance while others might have worse bias. 
Thus, it is necessary to adopt different bias-variance tradeoff strategies for different user-item pairs. To achieve this goal, $\lambda$ can be computed through a function $\lambda_\xi(x_{u,i})$ parameterized by $\xi$, such as a neural network, which enables personalized values for different user-item pairs. The improved loss of DR-MSE is as follows
{
 \begin{equation}\label{drmsev2}
 \begin{array}{lll}
\cL_{e}^{DR-MSE}\left(\theta, \lambda_\xi\right)=\sum\limits_{(u,i)\in \cD} \frac{ o_{u,i}\lambda_\xi(x_{u,i}) (  \hat e_{u,i} - e_{u,i} )^{2}  }{  \hat p_{u,i} } \cdot  \frac{  (o_{u,i} - \hat p_{u,i})^{2}   }{ \hat p_{u,i}^2 }\\
\qquad\qquad\qquad\quad\;\, +\sum\limits_{(u,i)\in \cD} \frac{ o_{u,i}(1-\lambda_\xi(x_{u,i})) (  \hat e_{u,i} - e_{u,i} )^{2}  }{    \hat p_{u,i}  } \cdot  \frac{ 1- \hat p_{u,i}  }{ \hat p_{u,i} }.
 \end{array}
 \end{equation}
}
}

Essentially, the generalization bound of DR methods contains a weighted sum of the bias term and square root of the variance term, which can be flexibly tradeoff via the proposed generalized MSE metric in~(\ref{mse_loss}). Thus, it is expected that DR-MSE can obtain a better prediction performance under the tighter generalization bound.

\begin{figure}
    \centering
    \includegraphics[width=0.45\textwidth]{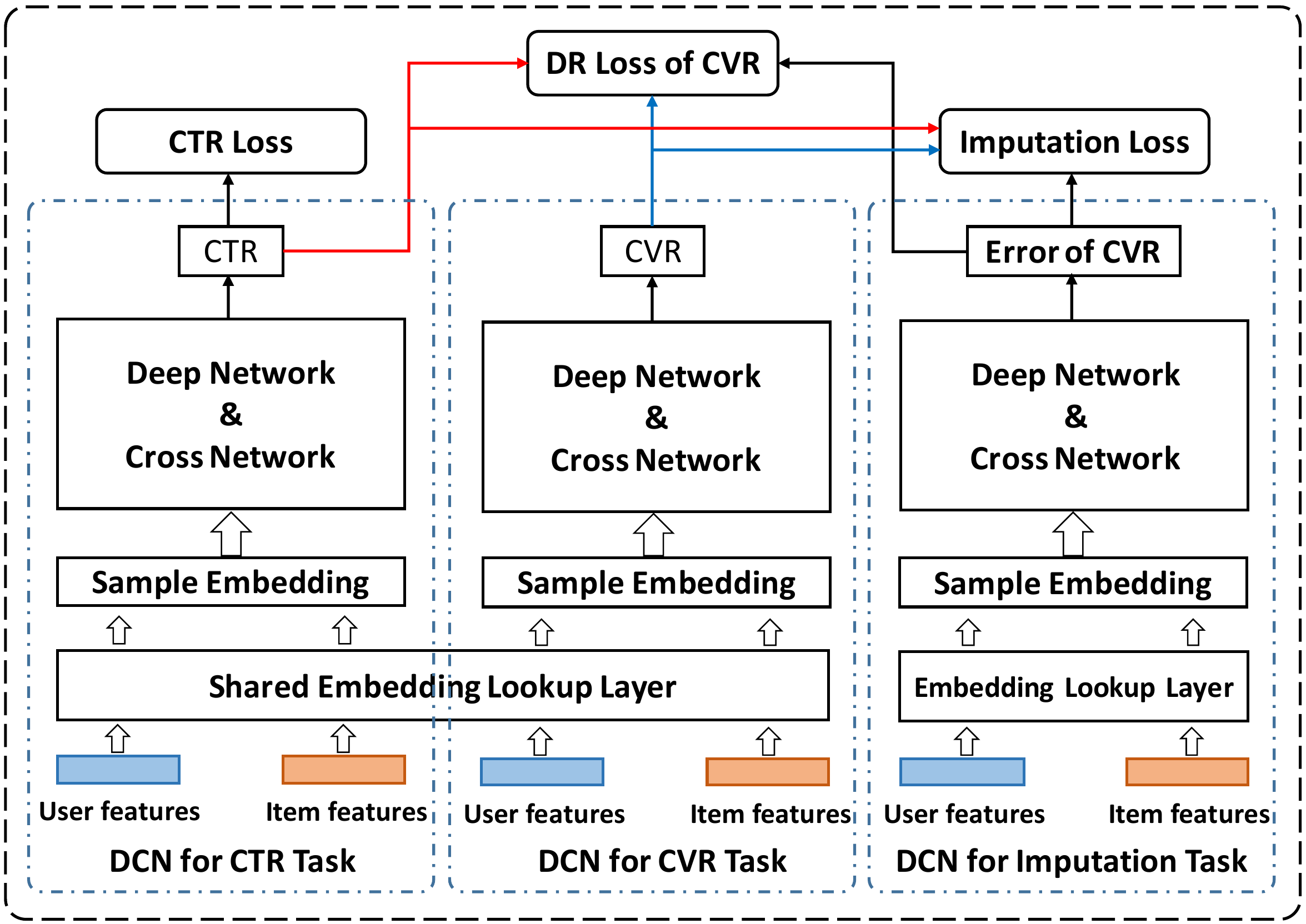}
    \vspace{-0.2cm}
    \caption{Model architecture of DR-BIAS and DR-MSE for experiments on large-scale industrial dataset. DCN is used as the base model for feature interaction learning for illustration only, and it can be readily replaced with other models such as FM~\cite{rendle2010factorization}, Wide\&Deep~\cite{cheng2016wide} and DeepFM~\cite{guo2017deepfm}.}\label{fig:causalmtl4.1}
    \vspace{-0.2cm}
\end{figure}

\section{Proposed Training Approach}
\subsection{Model Architecture and Training Objective}
Figure~\ref{fig:causalmtl4.1} shows the architecture of DR-BIAS and DR-MSE for experiments on real industrial scenarios.
It is a multi-task learning framework with three DCN networks for the predication of post-view click-through rate (CTR), CVR, and error imputation, respectively. The embedding lookup layers of the DCN models for the CTR and CVR tasks are shared to tackle data sparsity issue, while the DCN model for the error imputation has its own embedding lookup layer. Note that DCN can be readily replaced with other models such as FM~\cite{rendle2010factorization},  Wide\&Deep~\cite{cheng2016wide} and DeepFM~\cite{guo2017deepfm}. We evaluate our proposed methods with both FM and DCN in our experiments.

During optimization, the CTR, CVR, and error imputation models are updated alternatively with stochastic gradient descent. Specifically, with the parameters of both CTR and CVR models fixed, the error imputation model is updated first by optimizing (\ref{drmsev2}).
With model parameters of the error imputation model fixed, the CTR and CVR models are optimized jointly through the sum of CVR loss and CTR loss 
$$
\begin{aligned}
\cL_{\text{CTCVR}}\left(\phi, \zeta, \theta(\lambda_\xi) \right)=& \cL_{DR}(\phi, \theta(\lambda_\xi)) + \cL_{CTR}(\zeta), 
\end{aligned}
$$ 
where $\cL_{DR}(\phi, \theta(\lambda_\xi))=\sum_{(u,i) \in \cD} [ \hat e_{u,i}  +   o_{u,i} (e_{u,i} -  \hat e_{u,i}) / \hat p_{u, i}]$, $\cL_{CTR}(\zeta) = - \sum_{(u,i) \in \cD}  [o_{u, i} \cdot \log (\hat p_{u,i})+(1-o_{u, i}) \cdot \log (1-\hat p_{u,i}) ]$, $\hat p_{u,i}$ is the predicted CTR value, and used as the estimated propensity for unbiased CVR estimation. 
This joint learning process continues until the model converges. For DR-MSE, the optimization process 
involves updating $\lambda_\xi(\cdot)$, which makes it more challenging. 
In Section~\ref{tri-level-opt}, we formally formulate the optimization problem and propose an effective training algorithm.
\subsection{Tri-Level DR-MSE Joint Learning (JL) Optimization and Training Algorithm
}\label{tri-level-opt}

We propose the tri-level optimization DR-MSE JL approach shown in Figure \ref{fig:DR-MSE}. Compared to the existing joint learning methods, our approach allows adaptively updating the $\lambda_\xi$ in DR-MSE. This goal can be formalized as the following tri-level optimization problem
{
$$
\begin{aligned}
\xi^{*} &=\arg \min _{\xi} \cL_{DR}\left(\phi^{*}(\theta^{*}(\lambda_\xi)), \zeta^{*}(\theta^{*}(\lambda_\xi))\right) \\
\text { s.t. } \phi^{*}(\theta^{*}(\lambda_\xi)), \zeta^{*}(\theta^{*}(\lambda_\xi))&=\arg \min _{\phi, \zeta} \cL_{\text{CTCVR}}\left(\phi, \zeta, \theta^{*}(\lambda_\xi) \right)\\
\text { s.t. } \theta^{*}(\lambda_\xi)&=\arg \min _{\theta} \cL_{e}^{DR-MSE}\left(\theta, \lambda_\xi\right)
\end{aligned}
$$}

\begin{figure}
    \centering
    \includegraphics[width=0.46\textwidth]{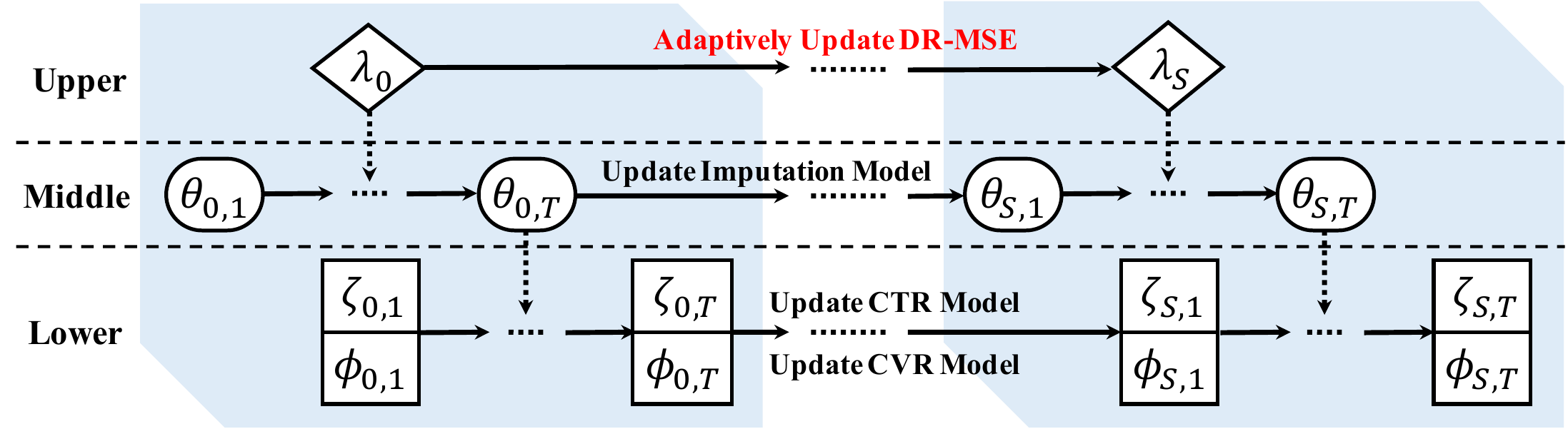} 
    \vspace{-0.2cm}
    \caption{The proposed tri-level DR-MSE joint learning optimization method updates the $\lambda_s$ in DR-MSE adaptively in upper level, while the existing bias-variance tradeoff approach uses a fixed $\lambda_0$. Middle and lower levels are for the joint training between the CTR\&CVR and error imputation models.}
    \label{fig:DR-MSE} 
    \vspace{-0.2cm}
\end{figure}

There are two challenges for solving the above problem. Firstly, it is computationally expensive to search for the optimal DR-MSE by minimizing the upper loss in the tri-level DR-MSE JL optimization method. Secondly, the DR-MSE parameter $\lambda_\xi$ of the upper model is difficult to be minimized as there is no closed-form solution. To address them, we further propose a training algorithm for this tri-level optimization problem as shown in Alg. 1. 

\begin{algorithm}[t]
\caption{Tri-Level DR-MSE JL Optimization Training}
\LinesNumbered 
\KwIn{$S$, observed ratings $\mathbf{R}^{o}$, learned propensities $\hat{\mathbf{P}}$, }
\For{$\mathcal{O}_{s}^l, \mathcal{O}_{s}^u \subset \mathcal{O}$ and $\mathcal{D}_{s} \subset \mathcal{D}$ ($s\in\{0, 1, \cdots, S-1\}$)}{
Compute an update function based on $\mathcal{O}_{s}^l$\
$\theta_{s+1}\left(\lambda_{s}\right) \leftarrow \theta_{s}-\eta \nabla_{\theta_s} \cL_{e}^{DR-MSE}\left(\theta, \lambda_{s}\right)$\;
Compute an update function based on $\mathcal{D}_{s}$\ $\phi_{s+1}(\theta_{s+1}\left(\lambda_{s}\right)) \leftarrow \phi_s-\eta \nabla_{\phi_s}\cL_{\text{CTCVR}}\left(\phi, \zeta_s,  \theta_{s+1}\left(\lambda_{s}\right)\right)$\;
Compute an update function $\zeta_{s+1}(\theta_{s+1}\left(\lambda_{s}\right)) \leftarrow \zeta_s-\eta \nabla_{\zeta_s}\cL_{\text{CTCVR}}\left(\phi_s, \zeta,  \theta_{s+1}\left(\lambda_{s}\right)\right)$\;
Compute the upper loss based on $\mathcal{O}_{s}^u$ \\ $\cL_{DR}\left(\phi_{s+1}(\theta_{s+1}\left(\lambda_{s}\right)), \zeta_{s+1}(\theta_{s+1}\left(\lambda_{s}\right))\right)$\;
Update the bias-variance trade off parameter \\
$\xi_{s+1} \leftarrow \xi_{s}-\eta \nabla_{\xi_{s}} \cL_{DR}\left(\phi_{s+1}(\theta_{s+1}(\lambda_{\xi})), \zeta_{s+1}(\theta_{s+1}(\lambda_{\xi}))\right)$\;
Update the bias-variance trade off model $\lambda_{s+1} \leftarrow \lambda_{\xi_{s+1}}$\;
\For{$\mathcal{O}_{s, t}^{l} \subset \mathcal{O}$ ($t\in\{0, 1, \cdots, T-1\}$)}{
Update the imputation model based on $\mathcal{O}_{s, t}^l$ $\theta_{s,t+1} \leftarrow \theta_{s,t}-\eta \nabla_{\theta_{s,t}} \cL_{e}^{DR-MSE}\left(\theta, \lambda_{s+1}\right)$\;}
\For{$\mathcal{D}_{s, t} \subset \mathcal{D}$ ($t\in\{0, 1, \cdots, T-1\}$)}{
Update the propensity model based on $\mathcal{D}_{s, t}$\ $\zeta_{s,t+1} \leftarrow \zeta_{s,t}-\eta \nabla_{\zeta_{s,t}}\cL_{\text{CTCVR}}\left(\phi_{s,t}, \zeta,  \theta_{s,T}\right)$\;
Update the predication model based on $\mathcal{D}_{s, t}$\ $\phi_{s,t+1} \leftarrow \phi_{s,t}-\eta \nabla_{\phi_{s,t}}\cL_{\text{CTCVR}}\left(\phi, \zeta_{s,t},  \theta_{s,T}\right)$\;}
Copy the model parameter $\theta_{s+1,0} \leftarrow \theta_{s,T}$\;
Copy the propensity model's parameter $\zeta_{s+1,0} \leftarrow \zeta_{s,T}$\;
Copy the predication model's parameter $\phi_{s+1,0} \leftarrow \phi_{s,T}$.
}
\end{algorithm}

For illustration purposes, the relevant parameters in Alg. 1 are updated using vanilla SGD. In practice, both SGD and its variants can be used for iterative updates. Specifically, for the error imputation optimization problem, $\cL_{e}^{DR-MSE}$ is differentiable w.r.t. the parameter $\theta$ of the error imputation model.  \textcolor{black}{Given $\lambda_s$, one can compute the value of $\theta_{s+1}\left(\lambda_{s}\right)$ after a single vinalla SGD. It should be noted that this value is not directly used for the update of the error imputation model parameter $\theta_{s+1}$. Moreover, the reason for using single-step SGD is that multi-step SGD here does not result in better performance, but rather increases the computational complexity~\cite{jenni2018deep}.}

Similarly, $\cL_{\text{CTCVR}}$ is differentiable w.r.t. both CTR model parameter $\zeta$ and CVR model parameter $\phi$. Given the pseudo-updated $\theta_{s+1}\left(\lambda_{s}\right)$, one can compute the value of $\zeta_{s+1}\left(\theta_{s+1}\left(\lambda_{s}\right)\right)$ and $\phi_{s+1}\left(\theta_{s+1}\left(\lambda_{s}\right)\right)$ after a single vanilla SGD. Both of the values are not directly used for the update of the CTR and CVR model as well. After that, with given $\zeta_{s+1}\left(\theta_{s+1}\left(\lambda_{s}\right)\right)$ and $\phi_{s+1}\left(\theta_{s+1}\left(\lambda_{s}\right)\right)$, we update the bias-variance tradeoff parameter in DR-MSE from $\lambda_s$ to $\lambda_{s+1}$ via a single vanilla SGD. Finally, based on the updated $\lambda_{s+1}$, we take the idea of joint learning to update the error imputation model parameter $\theta_{s+1}$ and the CTR\&CVR model parameters $\zeta_{s+1}, \phi_{s+1}$, in which the classical multi-step SGD is used until the stopping criteria is satisfied.


\section{Real-world Experiments}
In this section,  we evaluate the proposed methods by conducting experiments on three real-world datasets, including two benchmark datasets with missing-at-random (MAR) ratings and one large-scale industrial product dataset. We aim to answer the following two research questions (RQ): (1) How do our methods compare with state-of-the-art models in terms of debiasing performance in practice? (2) How do the bias-variance tradeoff and the modeling of unobserved data affect the performance of the proposed methods in practice?

\subsection{Experimental Setup}
\subsubsection{Datasets with MAR Ratings}
A MAR testing set is important for assessing the performance of an unbiased recommender. Thus, we follow existing studies~\cite{MRDR_DL,RecSys_Saito20} to use {\bf Coat Shopping\footnote{https://www.cs.cornell.edu/\textasciitilde schnabts/mnar/}} and {\bf Yahoo! R3\footnote{http://webscope.sandbox.yahoo.com/}} for the evaluation of CVR prediction model. To make the two datasets consistent with the CVR prediction task, we further preprocess them following previous studies~\cite{RecSys_Saito20,MRDR_DL}. The detailed descriptions of these two datasets and the corresponding data preprocessing method are provided in Appendix~\ref{apdx-data}.

\subsubsection{Industrial Product Dataset.}
To provide more comprehensive and reliable evaluation, we also conduct experiments on a large-scale App advertising dataset collected from a real-world system. We denote this dataset as \textbf{\textbf{Product}} with some statistics of it displayed in Table~\ref{tab:huawei}. It contains 8 consecutive days logged data from the system, with the first 7 days for training and the last day for testing. Each sample of the dataset contains features from a user, an item and the corresponding context. Although the unbiased data in CVR prediction is unobtainable in real applications since we cannot force users to randomly click the exposed items, the experiments can still provide valuable observations for the applications of debiasing CVR prediction models in real systems.

\subsubsection{Baselines and Implementation}
For experiments on \textbf{Coat} and \textbf{Yahoo}, we compare our methods with several competitive baselines, including Naive, IPS, DR-JL and MRDR. The base model for all methods is factorization machine. Some brief descriptions of them and implementation details are provided in Appendix~\ref{apdx-semi}. 
For experiments on \textbf{Product}, we also select some state-of-the-art CVR prediction models for large datasets, including DCN~\cite{wang2017deep}, ESMM~\cite{ESMM18}, Multi\_IPW~\cite{Multi_IPW} and Multi\_DR~\cite{Multi_IPW}. The base model for all methods is DCN. More details are provided in Appendix~\ref{apdx-product}.

\subsubsection{Experimental Protocols}
For experiments on \textbf{Coat} and \textbf{Yahoo}, we evaluate the ranking performance with two types of metrics, i.e., discounted cumulative gain (DCG) and recall, as prior work on debiasing CVR prediction~\cite{RecSys_Saito20,MRDR_DL}.
For experiments on \textbf{Product}, we evaluate our proposed methods on three important tasks, i.e., CTR, CVR, and CTCVR ($CTCVR=CTR*CVR$) predictions, with the AUC score following existing works~\cite{ESMM18,Multi_IPW}.

\begin{table}[t]
\caption{Statistics of the advertising dataset \textbf{Product}}
\vspace{-0.3cm}
\center
\small
\renewcommand\arraystretch{1.0}
\setlength{\tabcolsep}{8.pt}
\scalebox{0.85}{
\begin{tabular}{cccccc}
\hline
Dataset & \#Impression & \#Click & \#Conversion & \#User & \#Item\\
\hline
Training & 739.66M    & 3.73M   & 1.90M         & 524K &   68K\\
Testing & 99.73M     & 519K    & 268K          & 283K &   52K\\
\hline
\end{tabular}}
\begin{tablenotes}
\footnotesize
\item Note: ``M'' means million, and ``K'' means thousand. 
\end{tablenotes}
\label{tab:huawei}   
\vspace{-0.3cm}
\end{table}

\begin{table*}[t]
\caption{Performance comparison based on \textbf{Coat} and \textbf{Yahoo}. 
}
\vspace{-0.3cm}
\center
\small
\renewcommand\arraystretch{1.0}
\setlength{\tabcolsep}{5.pt}
\begin{threeparttable}  
\scalebox{0.90}{
\begin{tabular}{c|c|ccc|ccc}
\hline
Datasets                & Models           & \multicolumn{1}{c}{DCG@2}               & \multicolumn{1}{c}{DCG@4}               & \multicolumn{1}{c|}{DCG@6}               & \multicolumn{1}{c}{Recall@2}            & \multicolumn{1}{c}{Recall@4}            & \multicolumn{1}{c}{Recall@6}            \\
\hline
\multirow{7}{*}{\textbf{Coat}}  & Naïve           & 0.7283 $\pm$ 0.0264          & 0.9763 $\pm$ 0.0258          & 1.1512 $\pm$ 0.0241          & 0.8474 $\pm$ 0.0310          & 1.3786 $\pm$ 0.0374          & 1.8490 $\pm$ 0.0379          \\
                       & IPS             & 0.7102 $\pm$ 0.0220          & 0.9596 $\pm$ 0.0222          & 1.1299 $\pm$ 0.0210          & 0.8248 $\pm$ 0.0272          & 1.3596 $\pm$ 0.0360          & 1.8174 $\pm$ 0.0377          \\
                       & DR-JL           & 0.7416 $\pm$ 0.0224          & 1.0021 $\pm$ 0.0224          & 1.1762 $\pm$ 0.0229          & 0.8645 $\pm$ 0.0264          & 1.4225 $\pm$ 0.0362          & 1.8906 $\pm$ 0.0403          \\
                       & MRDR & 0.7442 $\pm$ 0.0225          & 1.0132 $\pm$ 0.0219          & 1.1947 $\pm$ 0.0194          & 0.8736 $\pm$ 0.0273          & 1.4494 $\pm$ 0.0325          & 1.9370 $\pm$ 0.0318          \\
                       \cline{2-8}
                       & DR-BIAS        & \textbf{0.7648 $\pm$ 0.0192*} & \textbf{1.0353 $\pm$ 0.0169*} & \textbf{1.2127 $\pm$ 0.0162*} & \textbf{0.8959 $\pm$ 0.0251*} & \textbf{1.4751 $\pm$ 0.0273*} & \textbf{1.9517 $\pm$ 0.0324*} \\
                       & DR-MSE          & \textbf{0.7682 $\pm$ 0.0151*} & \textbf{1.0401 $\pm$ 0.0150*} & \textbf{1.2170 $\pm$ 0.0139*} & \textbf{0.8997 $\pm$ 0.0194*} & \textbf{1.4816 $\pm$ 0.0241*} & \textbf{1.9569 $\pm$ 0.0262*} \\
                       \hline\hline
\multirow{7}{*}{\textbf{Yahoo}} & Naïve           & 0.5469 $\pm$ 0.0009          & 0.7466 $\pm$ 0.0008          & 0.8714 $\pm$ 0.0004 & 0.6479 $\pm$ 0.0012          & 1.0745 $\pm$ 0.0016          & 1.4098 $\pm$ 0.0013 \\
                       & IPS             & 0.5502 $\pm$ 0.0010          & 0.7520 $\pm$ 0.0009          & 0.8751 $\pm$ 0.0009          & 0.6545 $\pm$ 0.0017          & 1.0797 $\pm$ 0.0017          & \textbf{1.4168 $\pm$ 0.0019} \\
                       & DR-JL           & 0.5602 $\pm$ 0.0034          & 0.7586 $\pm$ 0.0030          & 0.8808 $\pm$ 0.0025          & 0.6615 $\pm$ 0.0042          & 1.0849 $\pm$ 0.0049          & 1.4129 $\pm$ 0.0039          \\
                       & MRDR & 0.5623 $\pm$ 0.0024          & 0.7603 $\pm$ 0.0027          & 0.8820 $\pm$ 0.0020          & 0.6646 $\pm$ 0.0033          & 1.0881 $\pm$ 0.0045          & 1.4145 $\pm$ 0.0037          \\
                       \cline{2-8}
                       & DR-BIAS        & \textbf{0.5646 $\pm$ 0.0023*} & \textbf{0.7624 $\pm$ 0.0021*} & \textbf{0.8841 $\pm$ 0.0018*}          & \textbf{0.6676 $\pm$ 0.0026*} & \textbf{1.0904 $\pm$ 0.0028*} & \textbf{1.4169 $\pm$ 0.0020}          \\
                       & DR-MSE          & \textbf{0.5662 $\pm$ 0.0017*} & \textbf{0.7639 $\pm$ 0.0016*} & \textbf{0.8850 $\pm$ 0.0014*} & \textbf{0.6670 $\pm$ 0.0026*} & \textbf{1.0891 $\pm$ 0.0029} & 1.4140 $\pm$ 0.0028          \\
                       \hline
\end{tabular}
}   
\end{threeparttable}
\begin{tablenotes}
\footnotesize
\item \qquad\qquad Note: * statistically significant results ($\text{p-value} \leq 0.05$) using the paired-t-test compared with the best baseline.
\end{tablenotes}
\label{tab:real-result}  
\vspace{-0.4cm}
\end{table*}

\begin{table}[t]
\caption{Performance comparison based on \textbf{Product}. 
}
\vspace{-0.3cm}
\center
\small
\renewcommand\arraystretch{1.0}
\setlength{\tabcolsep}{5.pt}
\begin{threeparttable}  
\scalebox{0.9}{
\begin{tabular}{cccc}
\hline
Models           & CTR AUC (\%)         & CVR AUC (\%)     & CTCVR AUC (\%)       \\
\hline
DCN              & 90.763          & 75.691           & 95.254          \\
ESMM             & 90.704          & 81.647          & 95.505          \\
DR-JL           & 90.754          & 81.768          & 95.548          \\
Multi\_IPW       & 90.794          & 81.912          & 95.571          \\
Multi\_DR        & 90.807          & 81.864          & 95.569          \\
MRDR             & 90.721          & 81.810          & 95.535          \\
\hline
DR-BIAS         & \textbf{90.913} & \textbf{81.974} & \textbf{95.633} \\
DR-MSE          & \textbf{90.825} & \textbf{82.067} & \textbf{95.654} \\
\hline
\end{tabular}
}   
\end{threeparttable}    
\label{tab:huawei-result}  
\vspace{-0.3cm}
\end{table}

\subsection{Overall Performance (RQ1)}

\subsubsection{Unbiased Evaluation.} 
The experimental results on \textbf{Coat} and \textbf{Yahoo} are shown in Table~\ref{tab:real-result}. We have the following observations.

First, our proposed methods are effective for debiasing CVR prediction task. As shown in Table~\ref{tab:real-result}, both DR-MSE and DR-BIAS consistently outperform all the other ones in terms of DCG@K and Recall@K ($K=2,4,6$) on the two real-world datasets, with only one exception of DR-MSE on Recall@6 of \textbf{Yahoo}. In particular, DR-MSE achieves a significant 3.22\%, 2.65\% and 1.87\% relative improvements over MRDR on DCG@2, DCG@4 and DCG@6, respectively.

Second, it is necessary to improve the bias and variance of the typical DR method under inaccurate propensity estimation and error imputation so as to enhance its robustness and ranking performance.
As shown in Table~\ref{tab:real-result}, IPS has worse performance on \textbf{Coat} and only comparable performance on \textbf{Yahoo} compared with the Naive method, since it suffers heavily from the high variance issue. Both DR-JL and MRDR performs better compared with IPS because of their double robustness. 
DR-BIAS improves over MRDR by achieving smaller bias through magnifying the penalty of the clicked events with low propensity while minifying those with high propensity as analyzed in Section~\ref{dr-bias}.
However, these DR methods still suffer from the high bias and/or variance issues.
Our proposed DR-MSE can further achieve improvements over all other DR methods by better controlling the bias and variance.

\subsubsection{Large-scale Industrial Dataset.} The experimental results on \textbf{Product} are shown in Table~\ref{tab:huawei-result}. Firstly, we can observe that ESMM improves over DCN on CVR and CTCVR prediction tasks by tackling the data sparsity issue with the multi-task learning framework, but it still suffers from the selection bias issue. Secondly, the debiasing CVR models can simultaneously tackle the data sparsity and selection bias issues, thus they outperform DCN and ESMM.
Thirdly, our proposed methods achieve significant improvements over existing debiasing CVR prediction models, including DR-JL, Multi\_IPW, Multi\_DR and MRDR, which is consistent with the observations on experiments with unbiased evaluation. It demonstrates that our proposed methods have both theoretical guarantee and great application potentials in real industrial systems. 

\subsection{In-depth Analysis of DR-MSE (RQ2)}
We conduct an analysis of two important aspects of DR-MSE with \textbf{Coat} in this section.
The experimental results are displayed in Figure~\ref{fig:hyperparameter}. Note that similar results can be observed on other datasets, and we do not present them here only due to space limitations.

The loss of the imputation model of DR-MSE contains a bias term and a variance term. We conduct experiments by manually varying $\lambda$ in Eq. (\ref{mse_loss}) to demonstrate the necessity of conducting bias-variance tradeoff. The left part of Figure~\ref{fig:hyperparameter} presents the experimental results of DR-MSE when varying $\lambda$ from 0.1 to 0.9. We can find that the performance of DR-MSE first improves with the increase of $\lambda$, and then gradually drops.
It shows that an appropriate tradeoff between this two terms can improve model generalization performance.

DR methods can achieve double robustness by jointly considering clicked events and unclicked events. Here, we also study the effect of the sample ratio of unclicked events to clicked events on the performance of DR-MSE.
When the sample ratio is set to ``All'', all the unclicked events are utilized for training; when the sample ratio is set to 0, only clicked events are utilized. As shown in Figure~\ref{fig:hyperparameter}, when the sample ratio ranges from 0 to ``All'', the DCG@K ($K=1,3,5$) scores on \textbf{Coat} show an apparent increase first, and then tend to saturate or decrease slightly. It suggests that a certain amount of unclicked events can provide useful information for improving the prediction model with the assistance of an imputation model, but further improvement is marginal when passing some threshold. In real advertising applications, the unclicked events are usually composed of the exposed but unclicked events in consideration of time efficiency. This empirical study on the sample ratio can provide some justification of the practice. 

\begin{figure} \centering
	\subfigure { \label{fig:lambda_mse}
		\includegraphics[width=0.40\columnwidth]{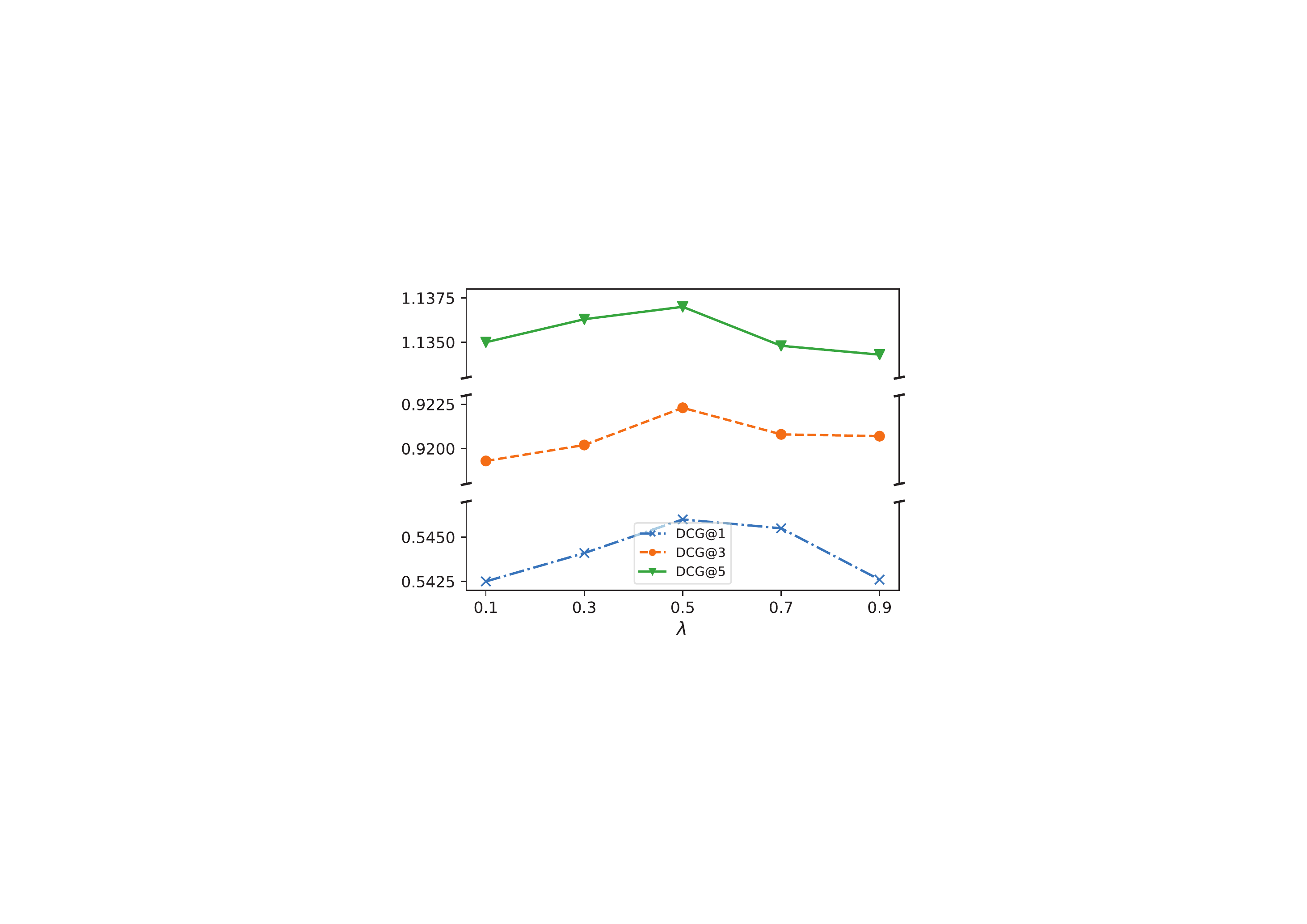}
	}
	\hspace{0.15in}
	\subfigure { \label{fig:sample_ratio}
		\includegraphics[width=0.37\columnwidth]{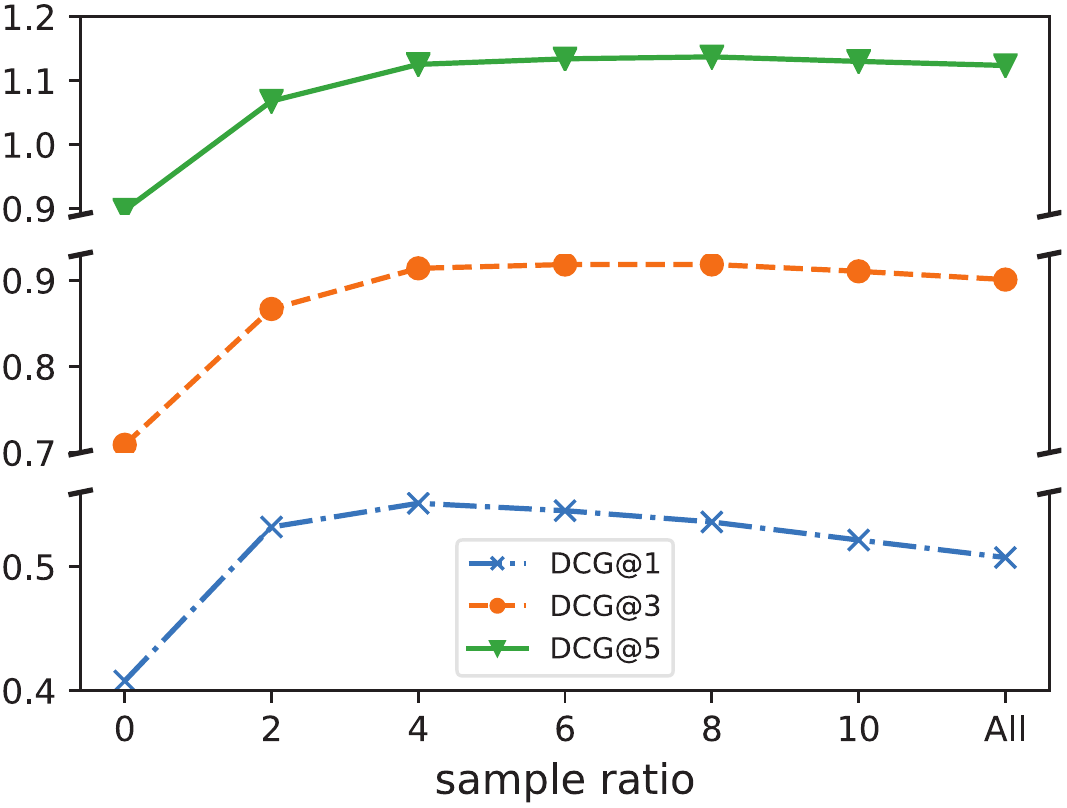}
	}
	\vspace{-0.2in}
	\caption{The effect of the coefficient $\lambda$ for balancing bias and variance, and the sample ratio of unclicked events to clicked events on the ranking performance of DR-MSE.}
	\label{fig:hyperparameter}
	\vspace{-0.3cm}
\end{figure}

\section{Semi-synthetic Data Experiments}


\begin{table*}[t]
\caption{Performance comparison on semi-synthetic datasets based on \textbf{ML-100k}. 
}
\vspace{-0.3cm}
\center
\small
\renewcommand\arraystretch{1.1}
\setlength{\tabcolsep}{5.pt}
\begin{threeparttable}  
\scalebox{0.9}{
\begin{tabular}{c|ccc|ccc}
\hline
Metrics  & \multicolumn{3}{c|}{AUC}                                                                          & \multicolumn{3}{c}{Log-loss}                                                                                       \\
\hline
$\rho$   & 0.5                            & 1                              & 2                              & 0.5                            & 1                              & 2                                         \\
\hline
Naïve   & 0.7250 $\pm$ 0.0001          & 0.6731 $\pm$ 0.0001          & 0.5279 $\pm$ 0.0070          & 0.3178 $\pm$ 0.0000          & 0.3343 $\pm$ 0.0001          & 0.4683 $\pm$ 0.0179  \\
IPS     & 0.7316 $\pm$ 0.0001          & 0.6648 $\pm$ 0.0028          & 0.5263 $\pm$ 0.0055          & 0.3165 $\pm$ 0.0001          & 0.3304 $\pm$ 0.0034          & 0.4789 $\pm$ 0.0132  \\
DR-JL   & 0.7319 $\pm$ 0.0004          & 0.6673 $\pm$ 0.0035          & 0.5703 $\pm$ 0.0032          & 0.3116 $\pm$ 0.0002          & 0.3255 $\pm$ 0.0012          & 0.3607 $\pm$ 0.0014  \\
MRDR & 0.7335 $\pm$ 0.0006          & 0.6765 $\pm$ 0.0021          & 0.5563 $\pm$ 0.0082          & 0.3067 $\pm$ 0.0002          & 0.3238 $\pm$ 0.0006          & 0.3650 $\pm$ 0.0047     \\
\hline
DR-BIAS & \textbf{0.7349 $\pm$ 0.0006*} & \textbf{0.6916 $\pm$ 0.0009*}          & \textbf{0.6073 $\pm$ 0.0054*} & \textbf{0.3064 $\pm$ 0.0001*}          & \textbf{0.3194 $\pm$ 0.0013*}          & \textbf{0.3494 $\pm$ 0.0058}  \\
DR-MSE  & \textbf{0.7359 $\pm$ 0.0002*} & \textbf{0.6928 $\pm$ 0.0020*} & \textbf{0.6084 $\pm$ 0.0168*} & \textbf{0.3059 $\pm$ 0.0001*} & \textbf{0.3193 $\pm$ 0.0028*} & \textbf{0.3477 $\pm$ 0.0084} \\
\hline
\multicolumn{7}{l}{Note: * statistically significant results ($\text{p-value} \leq 0.05$) using the paired-t-test compared with the best baseline.}
\end{tabular}
}   
\end{threeparttable} 
\label{tab:synthetic-result}  
\vspace{-0.25cm}
\end{table*}

In this section, we aim to investigate the robustness of our proposed method through experiments based on semi-synthetic datasets with different levels of selection bias.

\subsection{Experimental Setup}
\subsubsection{Datasets and Preprocessing}\label{data_synthesis}
\textbf{\textbf{MovieLens 100K}\footnote{https://grouplens.org/datasets/movielens/100k/} (\textbf{ML-100K})} is a dataset collected from a movie recommendation service with 100,000 MNAR ratings from 943 users and 1,682 movies. 
We used it to generate semi-synthetic datasets for experiments with the following standard procedures as previous studies~\cite{WSDM_SaitoYNSN20,RecSys_Saito20}.

(1) Obtain an approximation of the true ratings of each user on all items with rating-based matrix factorization~\cite{koren2009matrix}. We denote the predicted rating of a user $u$ on an item $i$ as $\hat{R}_{u,i}$. Then, the ground-truth CVR for conversion generation is generated as follows:
{\small
\[
    p^{cvr}_{u,i} = \sigma(\hat{R}_{u,i}-\epsilon), \forall (u,i)\in\cD,
\]
}
where $\sigma(\cdot)$ is the sigmoid function, and $\epsilon$ controls the level of overall relevance; $\epsilon$ is set to 5 in experiments.

(2) Obtain an approximation of the true observations with logistic matrix factorization~\cite{Johnson2014LogisticMF}. We denote the predicted probability of a user-item pair $(u,i)$ being observed as $\hat{O}_{u,i}$. Then, the ground-truth CTR for generating the click events is defined as follows:
{\small
\[
    p^{ctr}_{u,i} = (\hat{O}_{u,i})^{\rho}, \forall (u,i)\in\cD,
\]
}
where $\rho$ controls the skewness of the distribution of the CTR. A large value of $\rho$ means a huge selection bias in the clicked events and a small number of observed click and conversion events. We set $\rho$ as 0.5, 1, and 2 in the experiments.

(3) Sample binary click and conversion events with Bernoulli sampling based on the ground-truth CTR and CVR as follows:
{\small
\[
    o_{u,i} \sim Bern(p^{ctr}_{u,i}), \; r_{u,i} \sim Bern(p^{cvr}_{u,i}), \; \forall (u,i)\in\cD,
\]
}
where $Bern(\cdot)$ is the Bernoulli distribution. Then, the post-click conversions can be derived as $\{(u,i, r_{u,i})| o_{u,i} = 1\}$.

\subsubsection{Baselines and Implementation}\label{sec-baslines}
The baseline algorithms include the Naive method, IPS~\cite{schnabel2016recommendations}, DR-JL~\cite{Wang-Zhang-Sun-Qi2019}, and MRDR~\cite{MRDR_DL}. The detailed descriptions of the baselines and model implementation are provided in Appendix~\ref{apdx-semi}.

\subsubsection{Evaluation Protocols}
In semi-synthetic datasets, we have the ground-truth user preference information and the level of selection bias of the considered datasets, so that we can investigate model robustness through experiments. We generate the semi-synthetic datasets by setting $\rho$ as 0.5, 1 and 2. The biased set consists of the clicked events generated by the procedure described in Section~\ref{data_synthesis}, which is further divided into a training set (90\%) and a validation set (10\%). We conduct experiments in each setting for 10 times and report the average results. Note that larger value of $\rho$ means higher selection bias and less clicked events for training because of lower propensity. We use AUC and Log-loss on test sets to evaluate the ranking performance and the relevance prediction, respectively. The test set consists of user-item pairs randomly sampled from the unclicked ones, and we uniformly sample 50 items for each user in the experiments. 

\subsection{Results \& Discussion}
Our method DR-MSE has the best AUC scores and Log-loss results across all the considered levels of selection bias ($\rho=0.5, 1, 2$). It demonstrates that DR-MSE can achieve better ranking performance and relevance prediction. DR-BIAS also has impressive performance and outperforms MRDR significantly, which is probably because DR-BIAS achieves smaller bias by magnifying the penalty of the clicked events with low propensity while minifying those with high propensity.
With the increase of the power $\rho$, the performance of IPS drops dramatically, and was even worse than that of the Naive method. It shows that IPS suffers heavily from the high variance issue. Doubly robust learning approaches, including DR-JL, MRDR, DR-BIAS and DR-MSE, have better robustness against the selection bias and demonstrate better results compared with the IPS method. Our proposed DR-MSE performs the best because of its bias and variance reduction characteristics.


\vspace{-0.1cm}
\section{Related Work}
\subsection{Approaches to CVR Estimation}
In practice, CTR prediction models are commonly applied to CVR prediction task due to their inherent similarity. These CTR prediction approaches include logistic regression based methods~\cite{richardson2007predicting,CTR_LR_IJCSIS17}, factorization machine based methods~\cite{rendle2010factorization,juan2016field}, deep learning based methods~\cite{cheng2016wide,wang2017deep,guo2017deepfm,CIKM_WangZDSZHYB19}, etc.
In addition, many approaches are specially designed for CVR prediction because of several unique and critical issues of the task, such as delayed feedback~\cite{KDD_Chapelle14,AAAI_ywjzqdaz21,IJCAI_SuZDZYWBXHY20}, data sparsity~\cite{ESMM18,SIGIR_WenZWLBLY20} and selection bias~\cite{Multi_IPW,MRDR_DL}. 
In this paper, we mainly focus on tackling the selection bias issue.

\textbf{Selection bias} refers to the distribution drift between the train and inference data, which is widely studied recently~\cite{ESMM18,Multi_IPW,RecSys_Saito20,MRDR_DL}. Some existing multi-task learning methods, such as ESMM~\cite{ESMM18} and ESM$^2$~\cite{SIGIR_WenZWLBLY20}, can alleviate the selection bias, but they are heuristic methods and lack theoretic guarantee. 
Further, the author in~\cite{Multi_IPW} tried to use DR method to debias CVR prediction and proposed a model namely Multi\_DR with theoretic guarantee. But they only validated the proposed methods with the biased training and testing sets.
The authors in~\cite{SIGIR_SaitoMY20} proposed a dual learning algorithm for simultaneously tackling the delayed feedback issue and the selection bias issue. MRDR~\cite{MRDR_DL} designs a new loss for the imputation model to reduce the variance of Multi\_DR~\cite{Multi_IPW}. However, it might still suffer from the high bias of DR method due to the incorrect estimations of both propensity scores and imputed errors (which is common in practice). To tackle these problems, in this paper, we proposed a generalized doubly robust learning framework for debiasing CVR prediction, which enables us to propose two new DR methods with more favorable properties.

\vspace{-0.05in}
\subsection{Debiasing in Recommendation Tasks}
Recent years have witnessed many contributions on incorporating the causal inference idea into the recommendation domain for unbiased learning~\cite{schnabel2016recommendations,Wang-Zhang-Sun-Qi2019}.
For example, ~\cite{schnabel2016recommendations} explains the recommendation problem by a treatment-effect model, and designs an IPS based method to remove the bias in the observed data based on explicit feedback.~\cite{Wang-Zhang-Sun-Qi2019} improves over the IPS based method by designing a doubly robust learning approach.
In addition, several existing works~\cite{bonner2018causal,liu2020general,LTD_debias,AutoDebias} design debiasing models by leveraging the available small set of unbiased data.
Though these methods have achieved many successes in debiasing recommendation tasks, none of them are specially proposed for CVR prediction. How to design an unbiased learning algorithm for CVR prediction is highly important and needs to be studied further.

\vspace{-0.05in}
\section{Conclusion}
We have proposed a generic doubly robust (DR) learning framework for debiasing CVR prediction based on the theoretical analysis of the bias, variance and generalization bounds of existing DR methods. This framework enables us to develop a series of new estimators with different desired characteristics to accommodate different application scenarios in CVR prediction. In particular, based on the framework, we proposed two new DR methods, namely DR-BIAS and DR-MSE, which are designed to further reduce the bias and achieve a better bias-variance trade-off. 
In addition, we propose a novel tri-level  optimization for DR-MSE, and the corresponding efficient training algorithm. Finally, we empirically validate the effectiveness of the proposed methods by extensive experiments on both semi-synthetic and real-world datasets.

\section*{Acknowledgements}
This study was partly supported by grants from the National Science and Technology Major Project of the Ministry of Science and Technology of China (No.2021YFF0901400). We appreciate the support from Mindspore\footnote{\url{https://www.mindspore.cn}}, which is a new deep learning computing framework.
\bibliographystyle{ACM-Reference-Format}
\balance
\bibliography{reference}

\clearpage
\appendices

\section{Proof of lemmas} \label{proof_lemma}
This supplementary material contains the proofs of Lemma 1 and Lemma 2.  
For ease of exposition, let $\cL(\hat \bR) = \cL(\hat \bR, \bR^{o})$. 

\setcounter{theorem}{0}
\begin{lemma}[Bias and Variance]
Given 
imputed errors $\hat \bE$ and learned propensities $\hat \bP$ with $\hat p_{u,i} > 0$ for all user-item pairs, the bias and variance of DR estimator are given as 
    {
	\begin{align*}
   Bias[  \cL_{DR}(\hat \bR, \bR^{o}) ] ={}&   \frac{1}{ | \cD | } \Big |  \sum_{(u,i) \in D}  (p_{u,i} - \hat p_{u,i}) \frac{ ( e_{u,i} - \hat e_{u,i})  }{ \hat p_{u,i} }  \Big |, \\
    \V_{\cO}[ \cL_{DR}(\hat \bR, \bR^{o})  ]  ={}&  \frac{1}{ |\cD|^{2} } \sum_{(u,i)\in \cD}    p_{u,i} (1- p_{u,i})  \frac{ ( \hat e_{u,i} - e_{u,i} )^{2} }{  \hat p^{2}_{u,i}  }  . 
    \end{align*}}
\end{lemma}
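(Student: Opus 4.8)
The plan is to treat the click indicators $\{o_{u,i}\}$ as the sole source of randomness while holding $e_{u,i}$, $\hat e_{u,i}$ and $\hat p_{u,i}$ fixed, and to exploit that each $o_{u,i}$ is Bernoulli with mean $p_{u,i}$ and (across distinct pairs) independent of the others.

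For the bias, I would first evaluate $\mathbb{E}_{\cO}[\cL_{DR}(\hat \bR, \bR^{o})]$ by linearity of expectation: since only $o_{u,i}$ is random and $\mathbb{E}[o_{u,i}] = p_{u,i}$, each summand contributes $\hat e_{u,i} + p_{u,i}(e_{u,i} - \hat e_{u,i})/\hat p_{u,i}$. Subtracting the benchmark $\cL_{ideal} = |\cD|^{-1}\sum_{(u,i)} e_{u,i}$ and simplifying the per-term difference $\hat e_{u,i} - e_{u,i} + p_{u,i}(e_{u,i} - \hat e_{u,i})/\hat p_{u,i}$ by factoring out $(e_{u,i} - \hat e_{u,i})$ gives exactly $(p_{u,i} - \hat p_{u,i})(e_{u,i} - \hat e_{u,i})/\hat p_{u,i}$; taking the absolute value of the resulting average yields the claimed bias.

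For the variance, I would observe that the deterministic terms $\hat e_{u,i}$ do not contribute, so $\V_{\cO}[\cL_{DR}(\hat \bR, \bR^{o})] = |\cD|^{-2}\,\V_{\cO}\big[\sum_{(u,i)} o_{u,i}(e_{u,i} - \hat e_{u,i})/\hat p_{u,i}\big]$. Invoking independence of the click indicators across distinct user-item pairs, the variance of the sum splits into a sum of per-term variances. Each term is the constant $(e_{u,i} - \hat e_{u,i})/\hat p_{u,i}$ times $o_{u,i}$, so its variance equals that constant squared times $\V[o_{u,i}] = p_{u,i}(1 - p_{u,i})$, which assembles into the stated formula.

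Both steps are elementary algebra together with the first two Bernoulli moments, so there is no substantial obstacle. The only point requiring care is the independence assumption that lets me discard the cross-covariances $\Cov{o_{u,i}, o_{u',i'}}$ in the variance step; I would state it explicitly, since otherwise those covariance terms would survive. The bias identity, by contrast, is exact and needs no such assumption.
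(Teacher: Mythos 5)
Your proposal is correct and follows essentially the same route as the paper's own proof: the bias is obtained by taking the expectation over the Bernoulli click indicators, subtracting the ideal loss, and factoring $\hat e_{u,i} - e_{u,i} + p_{u,i}(e_{u,i} - \hat e_{u,i})/\hat p_{u,i}$ into $(p_{u,i}-\hat p_{u,i})(e_{u,i}-\hat e_{u,i})/\hat p_{u,i}$, and the variance is computed termwise as $\V_{\cO}[o_{u,i}]\cdot\left( (e_{u,i}-\hat e_{u,i})/\hat p_{u,i}\right)^2$. Your explicit statement of the independence of the $o_{u,i}$ across user-item pairs, which justifies discarding cross-covariances, is an assumption the paper's proof invokes silently when it splits the variance of the sum into a sum of variances, so flagging it is a clarification rather than a departure.
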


\begin{proof}  
According to the definition of bias, 
{
	\begin{align*}
		 Bias[  \cL_{DR}(\hat \bR) ] 
		  ={}&  \Big | \bfE_{\cO}[ \cL_{DR}(\hat \bR)]  -   \cL_{ideal}(\hat \bR, \bR)    \Big |    \\
		 ={}&  \Big |  \frac{1}{ |\cD| } \sum_{(u,i) \in \cD} \bfE_{\cO}[  \hat e_{u,i}  +  \frac{ o_{u,i} (e_{u,i} -  \hat e_{u,i}) }{ \hat p_{u, i} }  - e_{u,i} ]      \Big |  		  \\
				={}&    \Big |  \frac{1}{ |\cD| } \sum_{(u,i) \in \cD} [  \hat e_{u,i}  +  \frac{ p_{u,i} (e_{u,i} -  \hat e_{u,i}) }{ \hat p_{u, i} }  - e_{u,i} ]      \Big |  		  \\
		 ={}& \frac{1}{ | \cD | } \Big |   \sum_{(u,i) \in D}  \frac{ p_{u,i} - \hat p_{u,i}  }{ \hat p_{u,i} } ( e_{u,i} - \hat e_{u,i}) \Big |.
  \end{align*}
  }
The variance of $\cL_{DR}(\hat \bR)$ with respect to  click indicator is given as  
{
	\begin{align*}
			  \V_{\cO}[ \cL_{DR}(\hat \bR) ] 
			  ={}&    \frac{1}{ |\cD|^{2} }   \sum_{(u,i) \in \cD}   \V_{\cO}  [   \hat e_{u,i}  +  \frac{ o_{u,i} (e_{u,i} -  \hat e_{u,i}) }{ \hat p_{u, i} } ]     \\
			  ={}&   \frac{1}{ |\cD|^{2} }   \sum_{(u,i) \in \cD}   \V_{\cO}[o_{u,i}] \cdot \left( \frac{e_{u,i} - \hat e_{u,i } }{ \hat p_{u,i} }   \right)^{2}         \\
			  ={}&   \frac{1}{ |\cD|^{2} } \sum_{(u,i)\in \cD}    \frac{ p_{u,i} (1- p_{u,i})  }{    \hat p^{2}_{u,i}  }  ( \hat e_{u,i} - e_{u,i} )^{2}.  
	\end{align*}
	}
\end{proof}

To show the generalization bound of doubly robust estimator, we need the Hoeffding’s inequality for general bounded random variables, which is presented in Lemma 3.

\begin{lemma}[Generalization Bound]
For any finite hypothesis space $\cH$ of prediction matrices,  given   imputed errors $\hat \bE$ and learned propensities $\hat \bP$,  then  with probability $1 - \eta$, 
    {
	\begin{align*} 
	  \cL_{ideal}(\hat \bR^{*}, \bR)  
	  \leq{}&  \cL_{DR}(\hat \bR^{*}, \bR^{o})  +   \underbrace{\frac{1}{ | \cD | }  \sum_{(u,i) \in \cD}   \frac{ | p_{u,i} - \hat p_{u,i} |  }{ \hat p_{u,i} }  |  e_{u,i} - \hat e_{u,i}^{*}  | }_{\text{Bias term}}  \\
	   &  +  \underbrace{ \sqrt{ \frac{ \log(2|\cH | /\eta)  }{ 2 |\cD |^{2}  }   \sum_{(u,i)\in \cD} (  \frac{  e_{u,i} - \hat e_{u,i}^{\dag}  } { \hat p_{u,i} }  )^{2}   } }_{\text{Variance term}},       
	   \end{align*}
	   }
where $\hat e_{u,i}^{\dag}$ is the prediction error corresponding to the prediction matrix $\hat \bR^{\dag} = \arg \max_{ \hat \bR^{h} \in \cH } \sum_{(u,i) \in \cD } (e_{u,i} -\hat e_{u,i}^{h})^2 /\hat p_{u,i}^2$, $\hat e_{u,i}^{*}$ is the prediction error associated with  $\hat \bR^{*}$. 
\end{lemma}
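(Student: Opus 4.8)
The plan is to insert the population mean $\bfE_{\cO}[\cL_{DR}(\hat\bR^*, \bR^o)]$ and split the error at the learned predictor $\hat\bR^*$ into a deterministic bias gap and a stochastic fluctuation gap:
\[
\begin{aligned}
\cL_{ideal}(\hat\bR^*, \bR) = {}& \cL_{DR}(\hat\bR^*, \bR^o) + \underbrace{\big(\cL_{ideal}(\hat\bR^*, \bR) - \bfE_{\cO}[\cL_{DR}(\hat\bR^*, \bR^o)]\big)}_{(\mathrm{I})} \\
& + \underbrace{\big(\bfE_{\cO}[\cL_{DR}(\hat\bR^*, \bR^o)] - \cL_{DR}(\hat\bR^*, \bR^o)\big)}_{(\mathrm{II})}.
\end{aligned}
\]
The whole argument then reduces to matching $(\mathrm{I})$ to the Bias term and $(\mathrm{II})$ to the Variance term.

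For $(\mathrm{I})$, I would invoke the computation in the proof of Lemma~1: there $\bfE_{\cO}[\cL_{DR}(\hat\bR^*)] - \cL_{ideal}(\hat\bR^*, \bR) = \frac{1}{|\cD|}\sum_{(u,i)\in\cD}(p_{u,i}-\hat p_{u,i})(e_{u,i}-\hat e^*_{u,i})/\hat p_{u,i}$, so $(\mathrm{I})$ is bounded in absolute value by this quantity. Applying the triangle inequality inside the sum then yields exactly the stated Bias term $\frac{1}{|\cD|}\sum \frac{|p_{u,i}-\hat p_{u,i}|}{\hat p_{u,i}}|e_{u,i}-\hat e^*_{u,i}|$. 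This step is immediate given Lemma~1.

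Term $(\mathrm{II})$ is the crux and demands care, because $\hat\bR^*$ is the minimizer of the data-dependent loss $\cL_{DR}(\cdot,\bR^o)$ and hence depends on the click indicators $o_{u,i}$, so Hoeffding cannot be applied to it directly. The remedy is a uniform deviation bound over the finite class $\cH$. For a fixed $\hat\bR^h\in\cH$, $\cL_{DR}(\hat\bR^h, \bR^o)$ is an average of independent terms in $(u,i)$: the $(u,i)$ term equals $\hat e^h_{u,i}/|\cD|$ when $o_{u,i}=0$ and $[\hat e^h_{u,i}+(e_{u,i}-\hat e^h_{u,i})/\hat p_{u,i}]/|\cD|$ when $o_{u,i}=1$, so its range has width $\frac{1}{|\cD|}|e_{u,i}-\hat e^h_{u,i}|/\hat p_{u,i}$. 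Feeding these widths into the two-sided Hoeffding inequality (Lemma~3) gives, for each fixed $h$, a tail of the form $2\exp(-2t^2/V_h)$ with $V_h = \frac{1}{|\cD|^2}\sum_{(u,i)\in\cD} (e_{u,i}-\hat e^h_{u,i})^2/\hat p^2_{u,i}$, and a union bound over $\cH$ controls all hypotheses at once as soon as $2|\cH|\exp(-2t^2/V_h)\le\eta$.

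The final and only genuinely delicate point is removing the hypothesis-dependence of the threshold. Since every $V_h$ is dominated by $V_\dagger$ attained at $\hat\bR^\dagger=\arg\max_h \sum_{(u,i)\in\cD}(e_{u,i}-\hat e^h_{u,i})^2/\hat p^2_{u,i}$, solving $2|\cH|\exp(-2t^2/V_\dagger)=\eta$ produces a single uniform threshold $t=\sqrt{\frac{\log(2|\cH|/\eta)}{2|\cD|^2}\sum_{(u,i)\in\cD}(e_{u,i}-\hat e^\dagger_{u,i})^2/\hat p^2_{u,i}}$, i.e.\ precisely the Variance term. This bound holds with probability $1-\eta$ simultaneously for every $\hat\bR^h\in\cH$, so in particular $(\mathrm{II})$ is at most the Variance term at the data-chosen $\hat\bR^*$. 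Substituting the bounds on $(\mathrm{I})$ and $(\mathrm{II})$ back into the decomposition completes the proof; the worst-case replacement of the data-dependent minimizer by $\hat\bR^\dagger$ over the finite class is where I expect the main subtlety to lie.
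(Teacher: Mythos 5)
Your proposal is correct and follows essentially the same route as the paper's own proof: the identical three-term decomposition through $\bfE_{\cO}[\cL_{DR}(\hat \bR^{*}, \bR^{o})]$, the bias computation from Lemma~1 with the triangle inequality, and Hoeffding's inequality combined with a union bound over the finite class $\cH$, with the hypothesis-dependent variance replaced by its worst case at $\hat \bR^{\dag}$. The only difference is cosmetic (you phrase the per-term range via the two values of the DR summand rather than via $X_{u,i} = o_{u,i}(e_{u,i}-\hat e_{u,i})/\hat p_{u,i}$, which yields the same widths), so nothing further is needed.
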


\begin{proof}  We first note that 
{
	\begin{align}
	 &  \cL_{ideal}(\hat \bR^{*}, \bR) -    \cL_{DR}(\hat \bR^{*})  \notag \\
	  ={}&  \cL_{ideal}(\hat \bR^{*}, \bR)  - \bfE_{\cO}[  \cL_{DR}(\hat \bR^{*}) ] + \bfE_{\cO}[  \cL_{DR}(\hat \bR^{*}) ] -  \cL_{DR}(\hat \bR^{*})  \notag \\
	  \leq{}&  Bias[  \cL_{DR}(\hat \bR^{*}) ]  +   \bfE_{\cO}[  \cL_{DR}(\hat \bR^{*}) ] -  \cL_{DR}(\hat \bR^{*})  \notag \\
	  \leq{}&  \frac{1}{ | \cD | }  \sum_{(u,i) \in \cD}   \frac{ | p_{u,i} - \hat p_{u,i} |  }{ \hat p_{u,i} }  |  e_{u,i} - \hat e_{u,i}^{*}  |  +  \bfE_{\cO}[  \cL_{DR}(\hat \bR^{*}) ] -  \cL_{DR}(\hat \bR^{*}) \label{A1}. 
	  	\end{align}
	  	}
Next we focus on analyzing $\bfE_{\cO}[  \cL_{DR}(\hat \bR^{*}) ] -  \cL_{DR}(\hat \bR^{*})$.  
  By Hoeffding's inequality in Lemma 3,  let $X_{u,i} = \frac{ o_{u,i} (e_{u,i} - \hat e_{u,i})  }{ \hat p_{u,i}  }$, then $M_{u,i} - m_{u,i} = \frac{|e_{u,i} - \hat e_{u,i}|  }{ \hat p_{u,i} }$, and for any $\epsilon > 0$, we have  
  {
	\begin{align*}
	 & \P \big \{ \big |   \cL_{DR}(\hat \bR^{*}) - \bfE_{\cO}[  \cL_{DR}(\hat \bR^{*}) ]  \big |  \leq \epsilon   \big  \}  \\
={}&  1 -  \P \big \{  \big |   \cL_{DR}(\hat \bR^{*}) - \bfE_{\cO}[  \cL_{DR}(\hat \bR^{*}) ]  \big |  > \epsilon   \big  \}  \\
\geq{}&  1 -   \P \big \{ \sup_{ \hat \bR^{h} \in \cH } \big |   \cL_{DR}(\hat \bR^{h}) - \bfE_{\cO}[  \cL_{DR}(\hat \bR^{h}) ]  \big |  > \epsilon   \big  \}  \\
\geq{}&  1 -  \sum_{h=1}^{\cH}  \P \big \{ \big |   \cL_{DR}(\hat \bR^{h}) - \bfE_{\cO}[  \cL_{DR}(\hat \bR^{h}) ]  \big |  > \epsilon   \big  \}  \\ 
={}&   {\scriptsize 1-\sum_{h=1}^{\cH}  \P \big \{ \big | \sum_{(u,i)\in \cD } \big( \frac{ o_{u,i} (e_{u,i} - \hat e_{u,i}^{h})  }{ \hat p_{u,i}  }   -  \bfE_{\cO} \big ( \frac{ o_{u,i} (e_{u,i} - \hat e_{u,i}^{h})  }{ \hat p_{u,i}  } \big ) \big)  \big |  > \epsilon  |\cD| \big  \} }	\\
\geq{}&  1-  \sum_{h=1}^{\cH}  2   \exp\big \{  -  2  \epsilon^{2} |\cD|^{2} \big /  \sum_{(u,i)\in \cD} (  \frac{ e_{u,i} - \hat e_{u,i}^{h}   } { \hat p_{u,i} }  )^{2}     \big \}\\
 \geq{}&1 - 2  |\cH|  \exp\big \{  -  2  \epsilon^{2} |\cD|^{2} \big /  \sum_{(u,i)\in \cD} (  \frac{ e_{u,i} - \hat e_{u,i}^{\dag}   } { \hat p_{u,i} }  )^{2}     \big \}.   
	\end{align*}
	}
Letting $2 |\cH| \exp\big \{  -  2  \epsilon^{2} |\cD|^{2} \big /  \sum_{(u,i)\in \cD} (  \frac{ e_{u,i} - \hat e_{u,i}^{\dag}   } { \hat p_{u,i} }  )^{2}     \big \} = \eta$ yields that 
{
	  	\[       \epsilon =  \sqrt{ \frac{ \log(2 |\cH | /\eta)  }{ 2 |\cD |^{2}  }   \sum_{(u,i)\in \cD} (  \frac{  e_{u,i} - \hat e_{u,i}^{\dag}  } { \hat p_{u,i} }  )^{2}   }.         \]
	  	}
 Then with probability $1- \eta$, we have 
 {
 	\begin{equation} \label{A2}      \bfE_{\cO}[  \cL_{DR}(\hat \bR^{*}) ] -  \cL_{DR}(\hat \bR^{*}) \leq   \sqrt{ \frac{ \log(2 |\cH | /\eta)  }{ 2 |\cD |^{2}  }   \sum_{(u,i)\in \cD} (  \frac{ e_{u,i} - \hat e_{u,i}^{\dag}   } { \hat p_{u,i} }  )^{2}   }.     \end{equation}
 	}
Lemma 2 follows immediately from inequalities (\ref{A1}) and  	(\ref{A2}). 
\end{proof}

\begin{lemma}[Hoeffding’s inequality for general bounded random variables] Let $X_{1}, ..., X_{N}$ be independent random variables. Assume that $X_{i} \in [m_{i}, M_{i}]$ for every $i$, Then, for any $\epsilon > 0$, we have   
{
	\[      \P \big \{ \big | \sum_{i=1}^{N} X_{i}  - \sum_{i=1}^{N} \bfE X_{i}  \big |  > \epsilon   \big  \}  \leq 2 \exp\big \{  - \frac{ 2  \epsilon^{2} }{ \sum_{i=1}^{N} (M_{i} - m_{i})^{2} }  \big \}.       \]
	}
\end{lemma}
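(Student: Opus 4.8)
The plan is to establish this classical Hoeffding inequality by the exponential (Chernoff) method: bound the moment generating function of each centered summand, multiply, and then optimize the free exponential parameter.

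First I would reduce the two-sided bound to a one-sided tail. Writing $S = \sum_{i=1}^N (X_i - \bfE X_i)$, the event $\{|S| > \epsilon\}$ is contained in $\{S > \epsilon\} \cup \{-S > \epsilon\}$, so by a union bound it suffices to bound $\P\{S > \epsilon\}$; the complementary tail $\P\{-S > \epsilon\}$ follows by applying the identical argument to the variables $-X_i$, and the factor $2$ in the statement is exactly this union bound. For the one-sided tail I would apply the Chernoff trick: for any $s > 0$, Markov's inequality applied to the nonnegative variable $e^{sS}$ gives $\P\{S > \epsilon\} \le e^{-s\epsilon}\,\bfE[e^{sS}]$, and since $X_1,\dots,X_N$ are independent so are the centered variables $X_i - \bfE X_i$, whence $\bfE[e^{sS}] = \prod_{i=1}^N \bfE[e^{s(X_i - \bfE X_i)}]$.

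The key step, and the main obstacle, is to bound each factor $\bfE[e^{s(X_i - \bfE X_i)}]$ through Hoeffding's lemma: if $Y$ satisfies $\bfE Y = 0$ and $Y \in [a,b]$ almost surely, then $\bfE[e^{sY}] \le \exp\big(s^2(b-a)^2/8\big)$. To prove this I would use convexity of $y \mapsto e^{sy}$ to write $e^{sy} \le \frac{b-y}{b-a}e^{sa} + \frac{y-a}{b-a}e^{sb}$ for $y \in [a,b]$, take expectations, and apply $\bfE Y = 0$ to obtain $\bfE[e^{sY}] \le \frac{b}{b-a}e^{sa} - \frac{a}{b-a}e^{sb}$. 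Setting $p = -a/(b-a)$ and $u = s(b-a)$, the right-hand side equals $e^{\varphi(u)}$ with $\varphi(u) = -pu + \log(1-p+pe^u)$. A direct computation gives $\varphi(0) = \varphi'(0) = 0$ and $\varphi''(u) = \frac{p(1-p)e^u}{(1-p+pe^u)^2} \le \tfrac14$ for all $u$, the last bound being the inequality $t(1-t) \le 1/4$ applied to $t = pe^u/(1-p+pe^u)$. Taylor's theorem with remainder then yields $\varphi(u) \le u^2/8$, i.e.\ the claimed bound with $Y = X_i - \bfE X_i$ and $b - a = M_i - m_i$.

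Combining these facts gives $\P\{S > \epsilon\} \le \exp\!\big(-s\epsilon + \tfrac{s^2}{8}\sum_{i=1}^N (M_i - m_i)^2\big)$ for every $s > 0$. Finally I would minimize the exponent over $s$: the quadratic $-s\epsilon + \tfrac{s^2}{8}\sum_i (M_i-m_i)^2$ attains its minimum at $s^\star = 4\epsilon/\sum_i(M_i-m_i)^2$, with value $-2\epsilon^2/\sum_i(M_i-m_i)^2$, which is precisely the exponent in the statement. Adding the two one-sided tails through the union bound supplies the factor $2$ and closes the argument. Everything besides the moment generating function bound, namely the Chernoff reduction, the independence factorization, and the scalar optimization, is routine; the real work is concentrated in the convexity argument and the second-derivative estimate $\varphi'' \le 1/4$ underlying Hoeffding's lemma.
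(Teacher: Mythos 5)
Your proof is correct, and it is worth noting that the paper itself contains no argument for this lemma at all: its ``proof'' is a one-line citation to Theorem 2.2.6 of Vershynin (2018). What you have written out is precisely the classical argument that citation points to, so you have effectively replaced an external reference with a self-contained proof. Every step checks: the union bound over the two one-sided tails accounts exactly for the factor $2$; Markov's inequality applied to $e^{sS}$ together with independence gives $\P\{S > \epsilon\} \le e^{-s\epsilon} \prod_{i=1}^{N} \bfE[e^{s(X_i - \bfE X_i)}]$; your derivation of Hoeffding's lemma is sound, since with $p = -a/(b-a)$ and $u = s(b-a)$ one indeed gets $\log \bfE[e^{sY}] \le \varphi(u) = -pu + \log(1-p+pe^u)$, and the computation $\varphi''(u) = t(1-t) \le 1/4$ with $t = pe^u/(1-p+pe^u)$ combined with Taylor's theorem yields $\varphi(u) \le u^2/8$; and the optimizer $s^{\star} = 4\epsilon / \sum_{i}(M_i - m_i)^2$ produces the exponent $-2\epsilon^2 / \sum_{i}(M_i - m_i)^2$ as claimed. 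The only pedantic gaps are degenerate cases: when $a = b$ (a constant $X_i$) or $p \in \{0,1\}$ the substitution in Hoeffding's lemma is ill-defined but the MGF bound holds trivially, and if every $M_i = m_i$ then $s^{\star}$ involves division by zero, though the inequality is then vacuous or trivially true; a sentence dispatching these would make the argument airtight. Compared with the paper, your route buys self-containedness at the cost of a page of standard calculus, which is a reasonable trade for a result this central to the generalization bound in Lemma 2.
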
 
\begin{proof}
	The proof can be found in Theorem 2.2.6 of \cite{Vershynin2018}. 
\end{proof}

\section{Experimental Settings on \textbf{Coat}, \textbf{Yahoo}, and Semi-Synthetic Datasets}\label{apdx-semi}
Here, we provide more detailed experimental settings on \textbf{Coat}, \textbf{Yahoo}, and semi-synthetic datasets generated from \textbf{ML-100K}.

\subsection{Datasets}\label{apdx-data}
\begin{itemize}[leftmargin=5.5mm]
	\item {\bf \textbf{Coat Shopping}}: It contains a MNAR training set and a MAR testing set. Specifically, there are 6,960 five-star ratings from 290 Amazon Mechanical Turkers on an inventory of 300 coats in the training set. There are 4,640 ratings collected from the 290 workers on 16 randomly selected coats in the testing set.
	\item {\bf \textbf{Yahoo! R3}}: It includes a MNAR training set with 311,704 five-star ratings from 15,400 users and 1,000 songs, and a MAR testing set with 54,000 ratings from 5,400 users on 10 randomly selected songs.
\end{itemize}		
To make the two datasets consistent with the CVR prediction task, we further preprocess them following previous studies~\cite{RecSys_Saito20,MRDR_DL}:	
 	\begin{enumerate}
	\item  The conversion label $r_{u,i}$ is defined as 1 if the rating of item $i$ by user $u$ is greater than or equal to 4, and 0 otherwise.
	\item   The click indicator $o_{u,i}$ is defined as 1 if user $u$ rated item $i$, and 0 otherwise.
    \item The sets of observed potential conversion labels $r_{u,i}(1)$ is denoted as $\bR^{o} = \{ r_{u,i}(1) \mid o_{u,i} = 1 \} = \{ r_{u,i} \mid o_{u,i} = 1 \}$.  					
	\end{enumerate}
For both datasets, we split the corresponding MNAR dataset into a training (90\%) and a validation (10\%) sets, while all the MAR data is set to testing set. In addition, we restrict our samples to the users with at least one conversion behavior in the testing set as~\cite{RecSys_Saito20,MRDR_DL}. 

\vspace{-0.1cm}

\subsection{Baselines}\label{apdx-semi-baslines}
We compare our proposed methods with the following baselines: 
\begin{itemize}[leftmargin=5.5mm]
\item {\bf Naive}: It directly uses the naive estimator as the loss function for CVR prediction.
\item {\bf IPS}~\cite{schnabel2016recommendations}: It uses the inverse propensity reweighting approach to adjust the distribution of the biased training data.
\item {\bf DR-JL}~\cite{Wang-Zhang-Sun-Qi2019}: It proposes a doubly robust learning model which jointly trains the imputation model and prediction model.
\item {\bf MRDR}~\cite{MRDR_DL}: It is the state-of-the-art model for debiasing CVR prediction, which reduces the variance of doubly robust learning method by designing a new loss for the imputation model.
\end{itemize}
For all considered methods, we follow prior work~\cite{MRDR_DL} to use factorization machine (FM)~\cite{rendle2010factorization} for both CTR and CVR predictions in experiments of \textbf{Coat}, \textbf{Yahoo}, and the semi-synthetic datasets. The CTR prediction model is firstly learned with FM, and used to generate the CTR scores for inverse propensity weighting as~\cite{Wang-Zhang-Sun-Qi2019,MRDR_DL}.

\vspace{-0.1cm}


\subsection{Model Implementation}\label{apdx-model-implement}
We implement all models with TensorFlow~\cite{tensorflow2015-whitepaper} and optimize them with mini-batch Adam~\cite{kingma2014adam}.
We determine the hyper-parameters of each model based on grid search, and the search ranges for the embedding size, batch size, learning rate, L2 regularization coefficient, and sample ratio of unclicked events to clicked events are set as \{16, 32, 64, 128, 256\}, \{256, 512, 1024, 2048\}, \{5e-5, 1e-4, 5e-4, 1e-3, 5e-3, 1e-2\}, \{1e-5, 5e-5, 1e-4, 5e-4, 1e-3, 5e-3\}, and \{2, 4, 6, 8\}, respectively.
The best configuration for each method is determined based on the ranking performance on the validation set.

\section{Experimental Settings on Dataset \textbf{Product}}\label{apdx-product}
\subsection{Baselines}
We further provide some descriptions of the baselines as follows:
\begin{itemize}[leftmargin=5.5mm]
    \item \textbf{DCN}~\cite{wang2017deep}: It is a widely used deep CTR prediction model with a naive estimator. It consists of a deep network and a cross network for feature interaction learning. It is the base model for building all other models.
    \item \textbf{ESMM}~\cite{ESMM18}: It is a multi-task learning model that jointly optimizes CTR prediction and CTCVR prediction.
    \item \textbf{DR-JL}~\cite{Wang-Zhang-Sun-Qi2019}: This model is proposed for debiasing rating prediction by designing a doubly robust learning approach that jointly trains the error imputation model and prediction model. We adapt it for CVR prediction on large-scale dataset with the model architecture shown in Figure~\ref{fig:causalmtl4.1}.
    \item \textbf{Multi\_IPW}~\cite{Multi_IPW}: This model tackles the selection bias in CVR prediction with the inverse propensity weighting approach. It jointly optimizes the CTR loss and IPS based CVR loss.
    \item \textbf{Multi\_DR}~\cite{Multi_IPW}: This model tackles the selection bias in CVR prediction with the doubly robust learning approach inspired by the DR-JL method.
    \item {\bf MRDR}~\cite{MRDR_DL}: It is the state-of-the-art model for debiasing CVR prediction, which reduces the variance of DR method by designing a new loss for the imputation model. However, in the original paper, no experiments on large-scale datasets have been conducted. The original model implementation is not suitable for large-scale dataset, thus we adapt it for experiments on \textbf{Product} with the model architecture shown in Figure~\ref{fig:causalmtl4.1}.
\end{itemize}

For DCN, we train two separate models for CTR and CVR predictions, respectively, and then combine the predictions of these two tasks to obtain the prediction of CTCVR. Besides, the prediction models of DR based methods, including DR-JL, MRDR, DR-BIAS and DR-MSE, are adapted into a multi-task learning framework presented in Figure~\ref{fig:causalmtl4.1} to jointly model CTR prediction and CVR prediction. In other words, the propensity estimation model is jointly learned with the prediction model to handle the data sparsity and selection bias issues. 

\subsection{Model Implementation}
We implement all models with TensorFlow and optimize them with mini-batch Adam. 
For DCN, the embedding size, batch size, learning rate, keep probability of dropout, L2 regularization coefficient and L1 regularization coefficient are set to 150, 8000, 1.5e-4, 0.9, 1e-4, and 1e-8, respectively. The structure of deep network of DCN is set to [1024, 512, 64], and the number of cross layers is set to 3. Other models, including ESMM, DR-JL, Multi\_IPW, Multi\_DR, DR-BIAS and DR-MSE, are built upon DCN. They use similar settings as the baseline DCN for common hyper-parameters. 
Besides, IPS based loss suffers from the high variance issue. We clip the predicted CTR with $\max\{0.03, CTR\}$ to obtain propensity score for both IPS based methods and DR based methods to alleviate this issue. \textbf{Product} contains a training set and a testing set. We report the best results among all training epochs on the testing set of all methods in Table~\ref{tab:huawei-result} for comparison.


\end{document}


\fancyhead{}

\title{Supplementary Material for "A Generalized Doubly Robust Learning Framework for Debiasing Post-Click Conversion Rate Prediction"}

\author{Anonymous Author(s)}


\maketitle


This supplementary material contains the proofs of Lemma 1 and Lemma 2.  
For ease of exposition, let $\cL(\hat \bR) = \cL(\hat \bR, \bR^{o})$.

\begin{lemma}[Bias and Variance]  \label{lemma1}
Given 
imputed errors $\hat \bE$ and learned propensities $\hat \bP$ with $\hat p_{u,i} > 0$ for all user-item pairs, the bias and variance of DR estimator are given as 
    {\small
	\begin{align*}
   Bias[  \cL_{DR}(\hat \bR, \bR^{o}) ] ={}&   \frac{1}{ | \cD | } \Big |  \sum_{(u,i) \in D}  (p_{u,i} - \hat p_{u,i}) \frac{ ( e_{u,i} - \hat e_{u,i})  }{ \hat p_{u,i} }  \Big |, \\
    \V_{\cO}[ \cL_{DR}(\hat \bR, \bR^{o})  ]  ={}&  \frac{1}{ |\cD|^{2} } \sum_{(u,i)\in \cD}    p_{u,i} (1- p_{u,i})  \frac{ ( \hat e_{u,i} - e_{u,i} )^{2} }{  \hat p^{2}_{u,i}  }  . 
    \end{align*}}
\end{lemma}

\begin{proof}  
According to the definition of bias, 
	\begin{align*}
		 Bias[  \cL_{DR}(\hat \bR) ] 
		  ={}&  \Big | \bfE_{\cO}[ \cL_{DR}(\hat \bR)]  -   \cL_{ideal}(\hat \bR, \bR)    \Big |    \\
		 ={}&  \Big |  \frac{1}{ |\cD| } \sum_{(u,i) \in \cD} \bfE_{\cO}[  \hat e_{u,i}  +  \frac{ o_{u,i} (e_{u,i} -  \hat e_{u,i}) }{ \hat p_{u, i} }  - e_{u,i} ]      \Big |  		  \\
				={}&    \Big |  \frac{1}{ |\cD| } \sum_{(u,i) \in \cD} [  \hat e_{u,i}  +  \frac{ p_{u,i} (e_{u,i} -  \hat e_{u,i}) }{ \hat p_{u, i} }  - e_{u,i} ]      \Big |  		  \\
		 ={}& \frac{1}{ | \cD | } \Big |   \sum_{(u,i) \in D}  \frac{ p_{u,i} - \hat p_{u,i}  }{ \hat p_{u,i} } ( e_{u,i} - \hat e_{u,i}) \Big |.
  \end{align*}
The variance of $\cL_{DR}(\hat \bR)$ with respect to  click indicator is given as  
	\begin{align*}
			  \V_{\cO}[ \cL_{DR}(\hat \bR) ] 
			  ={}&    \frac{1}{ |\cD|^{2} }   \sum_{(u,i) \in \cD}   \V_{\cO}  [   \hat e_{u,i}  +  \frac{ o_{u,i} (e_{u,i} -  \hat e_{u,i}) }{ \hat p_{u, i} } ]     \\
			  ={}&   \frac{1}{ |\cD|^{2} }   \sum_{(u,i) \in \cD}   \V_{\cO}[o_{u,i}] \cdot \Big ( \frac{e_{u,i} - \hat e_{u,i } }{ \hat p_{u,i} }   )^{2}         \\
			  ={}&   \frac{1}{ |\cD|^{2} } \sum_{(u,i)\in \cD}    \frac{ p_{u,i} (1- p_{u,i})  }{    \hat p^{2}_{u,i}  }  ( \hat e_{u,i} - e_{u,i} )^{2}.  
	\end{align*}
\end{proof}

To show the generalization bound of doubly robust estimator, we need the Hoeffding’s inequality for general bounded random variables, which is presented in Lemma 3.

\begin{lemma}[Generalization Bound] \label{lemma2}
For any finite hypothesis space $\cH$ of prediction matrices,  given   imputed errors $\hat \bE$ and learned propensities $\hat \bP$,  then  with probability $1 - \eta$, 
    {
	\begin{align*} 
	  \cL_{ideal}(\hat \bR^{*}, \bR)  
	  \leq{}&  \cL_{DR}(\hat \bR^{*}, \bR^{o})  +   \underbrace{\frac{1}{ | \cD | }  \sum_{(u,i) \in \cD}   \frac{ | p_{u,i} - \hat p_{u,i} |  }{ \hat p_{u,i} }  |  e_{u,i} - \hat e_{u,i}^{*}  | }_{\text{Bias term}}  \\
	   &  +  \underbrace{ \sqrt{ \frac{ \log(2|\cH | /\eta)  }{ 2 |\cD |^{2}  }   \sum_{(u,i)\in \cD} (  \frac{  e_{u,i} - \hat e_{u,i}^{\dag}  } { \hat p_{u,i} }  )^{2}   } }_{\text{Variance term}},       
	   \end{align*}
	   }
where $\hat e_{u,i}^{\dag}$ is the prediction error corresponding to the prediction matrix $\hat \bR^{\dag} = \arg \max_{ \hat \bR^{h} \in \cH } \sum_{(u,i) \in \cD } (e_{u,i} -\hat e_{u,i}^{h})^2 /\hat p_{u,i}^2$, $\hat e_{u,i}^{*}$ is the prediction error associated with  $\hat \bR^{*}$. 
\end{lemma}

\begin{proof}  We first note that 
	\begin{align}
	 &  \cL_{ideal}(\hat \bR^{*}, \bR) -    \cL_{DR}(\hat \bR^{*})  \notag \\
	  ={}&  \cL_{ideal}(\hat \bR^{*}, \bR)  - \bfE_{\cO}[  \cL_{DR}(\hat \bR^{*}) ] + \bfE_{\cO}[  \cL_{DR}(\hat \bR^{*}) ] -  \cL_{DR}(\hat \bR^{*})  \notag \\
	  \leq{}&  Bias[  \cL_{DR}(\hat \bR^{*}) ]  +   \bfE_{\cO}[  \cL_{DR}(\hat \bR^{*}) ] -  \cL_{DR}(\hat \bR^{*})  \notag \\
	  \leq{}&  \frac{1}{ | \cD | }  \sum_{(u,i) \in \cD}   \frac{ | p_{u,i} - \hat p_{u,i} |  }{ \hat p_{u,i} }  |  e_{u,i} - \hat e_{u,i}^{*}  |  +  \bfE_{\cO}[  \cL_{DR}(\hat \bR^{*}) ] -  \cL_{DR}(\hat \bR^{*}) \label{A1}. 
	  	\end{align}
Next we focus on analyzing $\bfE_{\cO}[  \cL_{DR}(\hat \bR^{*}) ] -  \cL_{DR}(\hat \bR^{*})$.  
  By Hoeffding's inequality in Lemma 3,  let $X_{i} = \frac{ o_{u,i} (e_{u,i} - \hat e_{u,i})  }{ \hat p_{u,i}  }$, then $M_{i} - m_{i} = \frac{|e_{u,i} - \hat e_{u,i}|  }{ \hat p_{u,i} }$, and for any $\epsilon > 0$, we have       
	\begin{align*}
	 & \P \big \{ \big |   \cL_{DR}(\hat \bR^{*}) - \bfE_{\cO}[  \cL_{DR}(\hat \bR^{*}) ]  \big |  \leq \epsilon   \big  \}  \\
={}&  1 -  \P \big \{  \big |   \cL_{DR}(\hat \bR^{*}) - \bfE_{\cO}[  \cL_{DR}(\hat \bR^{*}) ]  \big |  > \epsilon   \big  \}  \\
\geq{}&  1 -   \P \big \{ \sup_{ \hat \bR^{h} \in \cH } \big |   \cL_{DR}(\hat \bR^{h}) - \bfE_{\cO}[  \cL_{DR}(\hat \bR^{h}) ]  \big |  > \epsilon   \big  \}  \\
\geq{}&  1 -  \sum_{h=1}^{\cH}  \P \big \{ \big |   \cL_{DR}(\hat \bR^{h}) - \bfE_{\cO}[  \cL_{DR}(\hat \bR^{h}) ]  \big |  > \epsilon   \big  \}  \\ 
={}&   1- \\
& {\scriptsize \sum_{h=1}^{\cH}  \P \big \{ \big | \sum_{(u,i)\in \cD } \big( \frac{ o_{u,i} (e_{u,i} - \hat e_{u,i}^{h})  }{ \hat p_{u,i}  }   -  \bfE_{\cO} \big ( \frac{ o_{u,i} (e_{u,i} - \hat e_{u,i}^{h})  }{ \hat p_{u,i}  } \big ) \big)  \big |  > \epsilon  |\cD| \big  \} }	\\
={}&  1-  \sum_{h=1}^{\cH}  2   \exp\big \{  -  2  \epsilon^{2} |\cD|^{2} \big /  \sum_{(u,i)\in \cD} (  \frac{ e_{u,i} - \hat e_{u,i}^{h}   } { \hat p_{u,i} }  )^{2}     \big \}\\
 \geq{}&1 - 2  |\cH|  \exp\big \{  -  2  \epsilon^{2} |\cD|^{2} \big /  \sum_{(u,i)\in \cD} (  \frac{ e_{u,i} - \hat e_{u,i}^{\dag}   } { \hat p_{u,i} }  )^{2}     \big \}.   
	\end{align*}
Letting $2 |\cH| \exp\big \{  -  2  \epsilon^{2} |\cD|^{2} \big /  \sum_{(u,i)\in \cD} (  \frac{ e_{u,i} - \hat e_{u,i}^{\dag}   } { \hat p_{u,i} }  )^{2}     \big \} = \eta$ yields that 
	  	\[       \epsilon =  \sqrt{ \frac{ \log(2 |\cH | /\eta)  }{ 2 |\cD |^{2}  }   \sum_{(u,i)\in \cD} (  \frac{  e_{u,i} - \hat e_{u,i}^{\dag}  } { \hat p_{u,i} }  )^{2}   }.         \]
 Then with probability $1- \eta$, we have 
 	\begin{equation} \label{A2}      \bfE_{\cO}[  \cL_{DR}(\hat \bR^{*}) ] -  \cL_{DR}(\hat \bR^{*}) \leq   \sqrt{ \frac{ \log(2 |\cH | /\eta)  }{ 2 |\cD |^{2}  }   \sum_{(u,i)\in \cD} (  \frac{ e_{u,i} - \hat e_{u,i}^{\dag}   } { \hat p_{u,i} }  )^{2}   }.     \end{equation}
Lemma 4 follows immediately from inequalities (\ref{A1}) and  	(\ref{A2}).

\end{proof}

\begin{lemma}[Hoeffding’s inequality for general bounded random variables] Let $X_{1}, ..., X_{N}$ be independent random variables. Assume that $X_{i} \in [m_{i}, M_{i}]$ for every $i$, Then, for any $\epsilon > 0$, we have     
	\[      \P \big \{ \big | \sum_{i=1}^{N} X_{i}  - \sum_{i=1}^{N} \bfE X_{i}  \big |  > \epsilon   \big  \}  \leq 2 \exp\big \{  - \frac{ 2  \epsilon^{2} }{ \sum_{i=1}^{N} (M_{i} - m_{i})^{2} }  \big \}.       \]
\end{lemma} 
\begin{proof}
	The proof can be found in Theorem 2.2.6 of \cite{Vershynin2018}. 
\end{proof}

\bibliographystyle{ACM-Reference-Format}
\bibliography{reference}